\icmltitlerunning{Continuously Indexed Domain Adaptation}
\newcommand{\tabref}[1]{Table~\ref{#1}}
\newcommand{\secref}[1]{Sec.~\ref{#1}}
\newcommand{\figref}[1]{Fig.~\ref{#1}}
\newcommand{\lemref}[1]{Lemma~\ref{#1}}
\newcommand{\thmref}[1]{Theorem~\ref{#1}}
\newcommand{\crlref}[1]{Corollary~\ref{#1}}
\newcommand{\eqnref}[1]{Eqn.~\ref{#1}}
\newenvironment{Itemize}%
{\begin{itemize}%
\setlength{\itemsep}{0pt}%
\setlength{\topsep}{0pt}%
\setlength{\partopsep}{0pt}%
\setlength{\parskip}{0pt}}%
{\end{itemize}}
\begin{document}

\twocolumn[
\icmltitle{Continuously Indexed Domain Adaptation}



\icmlsetsymbol{equal}{*}

\begin{icmlauthorlist}
\icmlauthor{Hao Wang}{equal,mit}
\icmlauthor{Hao He}{equal,mit}
\icmlauthor{Dina Katabi}{mit}
\end{icmlauthorlist}

\icmlaffiliation{mit}{MIT Computer Science and Artificial Intelligence Laboratory, Massachusetts, USA}

\icmlcorrespondingauthor{Hao Wang}{hoguewang@gmail.com}

\icmlkeywords{Domain Adaptation, Deep Learning, Machine Learning}

\vskip 0.3in
]



\printAffiliationsAndNotice{\icmlEqualContribution} 

\def\Blue{\color{blue}}
\def\Purple{\color{purple}}

\def\A{{\bf A}}
\def\a{{\bf a}}
\def\B{{\bf B}}
\def\b{{\bf b}}
\def\C{{\bf C}}
\def\c{{\bf c}}
\def\D{{\bf D}}
\def\d{{\bf d}}
\def\E{{\bf E}}
\def\e{{\bf e}}
\def\f{{\bf f}}
\def\F{{\bf F}}
\def\K{{\bf K}}
\def\k{{\bf k}}
\def\L{{\bf L}}
\def\H{{\bf H}}
\def\h{{\bf h}}
\def\G{{\bf G}}
\def\g{{\bf g}}
\def\I{{\bf I}}
\def\J{{\bf J}}
\def\R{{\bf R}}
\def\X{{\bf X}}
\def\Y{{\bf Y}}
\def\OO{{\bf O}}
\def\oo{{\bf o}}
\def\P{{\bf P}}
\def\p{{\bf p}}
\def\Q{{\bf Q}}
\def\q{{\bf q}}
\def\r{{\bf r}}
\def\s{{\bf s}}
\def\S{{\bf S}}
\def\t{{\bf t}}
\def\T{{\bf T}}
\def\x{{\bf x}}
\def\y{{\bf y}}
\def\z{{\bf z}}
\def\Z{{\bf Z}}
\def\M{{\bf M}}
\def\m{{\bf m}}
\def\n{{\bf n}}
\def\U{{\bf U}}
\def\u{{\bf u}}
\def\V{{\bf V}}
\def\v{{\bf v}}
\def\W{{\bf W}}
\def\w{{\bf w}}
\def\0{{\bf 0}}
\def\1{{\bf 1}}

\def\AM{{\mathcal A}}
\def\EM{{\mathcal E}}
\def\FM{{\mathcal F}}
\def\TM{{\mathcal T}}
\def\UM{{\mathcal U}}
\def\XM{{\mathcal X}}
\def\YM{{\mathcal Y}}
\def\NM{{\mathcal N}}
\def\OM{{\mathcal O}}
\def\IM{{\mathcal I}}
\def\GM{{\mathcal G}}
\def\PM{{\mathcal P}}
\def\LM{{\mathcal L}}
\def\MM{{\mathcal M}}
\def\DM{{\mathcal D}}
\def\SM{{\mathcal S}}
\def\ZM{{\mathcal Z}}
\def\RB{{\mathbb R}}
\def\EB{{\mathbb E}}
\def\VB{{\mathbb V}}

\def\tx{\tilde{\bf x}}
\def\ty{\tilde{\bf y}}
\def\tz{\tilde{\bf z}}
\def\hd{\hat{d}}
\def\HD{\hat{\bf D}}
\def\hx{\hat{\bf x}}
\def\hR{\hat{R}}

\def\Ome{\mbox{\boldmath$\omega$\unboldmath}}
\def\Om{\mbox{\boldmath$\Omega$\unboldmath}}
\def\bet{\mbox{\boldmath$\beta$\unboldmath}}
\def\et{\mbox{\boldmath$\eta$\unboldmath}}
\def\ep{\mbox{\boldmath$\epsilon$\unboldmath}}
\def\ph{\mbox{\boldmath$\phi$\unboldmath}}
\def\Pii{\mbox{\boldmath$\Pi$\unboldmath}}
\def\pii{\mbox{\boldmath$\pi$\unboldmath}}
\def\Ph{\mbox{\boldmath$\Phi$\unboldmath}}
\def\Ps{\mbox{\boldmath$\Psi$\unboldmath}}
\def\tha{\mbox{\boldmath$\theta$\unboldmath}}
\def\Tha{\mbox{\boldmath$\Theta$\unboldmath}}
\def\muu{\mbox{\boldmath$\mu$\unboldmath}}
\def\Si{\mbox{\boldmath$\Sigma$\unboldmath}}
\def\si{\mbox{\boldmath$\sigma$\unboldmath}}
\def\Gam{\mbox{\boldmath$\Gamma$\unboldmath}}
\def\gamm{\mbox{\boldmath$\gamma$\unboldmath}}
\def\Lam{\mbox{\boldmath$\Lambda$\unboldmath}}
\def\De{\mbox{\boldmath$\Delta$\unboldmath}}
\def\vps{\mbox{\boldmath$\varepsilon$\unboldmath}}
\def\Up{\mbox{\boldmath$\Upsilon$\unboldmath}}
\def\xii{\mbox{\boldmath$\xi$\unboldmath}}
\def\Xii{\mbox{\boldmath$\Xi$\unboldmath}}
\def\Lap{\mbox{\boldmath$\LM$\unboldmath}}
\newcommand{\ti}[1]{\tilde{#1}}

\def\tr{\mathrm{tr}}
\def\etr{\mathrm{etr}}
\def\etal{{\em et al.\/}\,}
\newcommand{\indep}{{\;\bot\!\!\!\!\!\!\bot\;}}
\def\argmax{\mathop{\rm argmax}}
\def\argmin{\mathop{\rm argmin}}
\def\vec{\text{vec}}
\def\cov{\text{cov}}
\def\dg{\text{diag}}

\newtheorem{observation}{\textbf{Observation}}
\newtheorem{remark}{Remark}
\newtheorem{theorem}{Theorem}
\newtheorem{lemma}{Lemma}
\newtheorem{definition}{Definition}
\newtheorem{problem}{Problem}
\newtheorem{proposition}{Proposition}
\newtheorem{cor}{Corollary}
\numberwithin{theorem}{section}
\numberwithin{lemma}{section}
\numberwithin{remark}{section}
\numberwithin{cor}{section}

\begin{abstract}
Existing domain adaptation focuses on transferring knowledge between domains with categorical indices (e.g., between datasets A and B).  However, many tasks involve continuously indexed domains. For example, in medical applications, one often needs to transfer disease analysis and prediction across patients of different ages, where age acts as a continuous domain index. Such tasks are challenging for prior domain adaptation methods since they ignore the underlying relation among domains. 
In this paper, we propose the first method for continuously indexed domain adaptation. Our approach combines traditional adversarial adaptation with a novel discriminator that models the encoding-conditioned domain index distribution. Our theoretical analysis demonstrates the value of leveraging the domain index to generate invariant features across a continuous range of domains. Our empirical results show that our approach outperforms the state-of-the-art domain adaption methods on both synthetic and real-world medical datasets\footnote{Code will soon be available at \url{https://github.com/hehaodele/CIDA}}.
\end{abstract}

\section{Introduction}
Machine learning often assumes that training and test data come from the same distribution, so that the trained model generalizes well to the test scenario. This assumption breaks however when the model is trained and tested in distinct domains, i.e., different source and target domains.  Domain adaption (DA) leverages labeled data from the source domains and unlabeled data (or a limited amount of labeled data) from the target domains to significantly improve performance~\cite{bendavid,DANN,ADDA,MDD}.

Existing DA methods however focus on adaption among categorical domains where the domain index is just a label.  A common example would be to adapt a model from one image dataset to another, e.g., adapting from MNIST to SVHN. However, many real-world tasks require adaptation among continuously index domains. For example, in medical applications, one needs to adapt disease diagnosis and prognosis across patients of different ages, where age is a continuous domain index. Treating the age of the source and target domains as domain labels is unlikely to yield the best results because it does not take advantage of the relationship between the disease manifestation and the person's age. Similar issues appear in robotics. For example, underwater robots have to operate at different water depths and viscosity, and one expects that adaptation across datasets from different depths or viscosity (e.g., lake vs. sea) should take into account  the relationship between the robot operation and the physical properties of the liquid in which it operates. These examples highlight the limitations of current DA methods when applied to continuously indexed domains.  

So, how should we perform domain adaption across continuously indexed domains? We note that in the above examples the domain index plays the role of a distance metric -- i.e., it captures a similarity distance between the domains with respect to the task. Thus, one approach for addressing the problem is to modify traditional adversarial adaptation to make the discriminator regress the domain index using a distance-based loss, like the $L_2$ or $L_1$ loss. Although this is better than categorical DA, we show analytically that such treatment can lead to equilibriums with relatively poor domain alignments.  A better solution is to develop a probabilistic discriminator that models the domain index distribution. We show that such a discriminator not only successfully captures the underlying relation among domains, but also enjoys better theoretical guarantees in terms of domain alignment. 
We also note that our method can be naturally generalized to handle multi-dimensional domain indices, achieving further performance gain. For example, in medial applications the index can be a vector of age, blood pressure, activity level, etc.

Our contributions are as follows:
\begin{itemize}
\item We identify the problem of adaptation across continuously indexed domains and propose continuously indexed domain adaptation (CIDA) as the first general DA method for addressing this problem. Further, we analyze our method and provide theoretical guarantees that CIDA aligns continuously indexed domains at equilibrium. 
\item We derive two advanced versions, probabilistic CIDA and multi-dimensional CIDA, to further improve performance and handle multi-dimensional domain indices, with minimal overhead.
\item 
We provide empirical results using both synthetic and real-world medical datasets which show that CIDA and its probabilistic and multi-dimensional variants significantly improve performance over the state-of-the-art DA methods for continuously indexed domains. 
\end{itemize}

\section{Related Work}\label{sec:related}
\textbf{Adversarial Domain Adaptation.} Much prior work has focused on the problem of domain adaptation~\cite{CDANN,CDAN,MCD,GTA,MDD}. The key idea is to match the distributions of the source and target domains. This is achieved by matching their distributions' statistics either directly~\cite{MMD,DDC,CORAL} or with the help of an adversarial loss~\cite{DANN,CDANN,ADDA,MDD,UDA-SGD}. Adversarial domain adaptation is particularly popular due to its relatively strong theoretical insights~\cite{GAN,AMSDA,MDD,InvariantDA} and its compatibility with neural networks. It aligns the distributions of the source and target domains by generating an encoding indistinguishable from a perspective of discriminator that is trained to classify the domain of the data. In this paper, we build on adversarial domain adaptation and extend it to address continuously indexed domains. 

\textbf{Incremental Domain Adaptation.} Closest to our work are incremental DA approaches. Essentially they assume the domain shifts smoothly over time and try to incrementally adapt the source domain to multiple target domains. Different methods are used to perform categorical DA for each domain pair, such as optimal transport~\cite{CDOT}, adversarial loss~\cite{bitarafan2016incremental}, generative adversarial networks~\cite{wulfmeier2018incremental}, and linear transform~\cite{hoffman2014continuous}.  \Citet{CUA} notices such incremental adaptation procedure is prone to catastrophic forgetting, a tendency to forget the knowledge of previous domains while specializing to a new domain, and therefore proposes a replay technique to tackle the issue. Here we note several key differences between CIDA and the methods above. (1) These approaches incrementally perform pair-wise \emph{categorical DA}. Hence failure in adapting one domain pair can lead to catastrophic failures for all following pairs. (2) They only work on DA tasks with one single domain shifting dimension (usually `time'), while our method naturally generalizes to multi-dimensional settings. Such differences are empirically verified in \secref{sec:experiment}.

\section{Methods}
\label{sec:method}
In this section, we formalize the problem of adaptation among continuously indexed domains, and describe our methods for addressing the problem. We then provide theoretical guarantees for the proposed methods in \secref{sec:theory}.

\textbf{Problem.} We consider the unsupervised domain adaptation setting and assume a set of continuous domain indices $\mathcal{U}=\mathcal{U}_s \cup \mathcal{U}_t$, where $\mathcal{U}_s$ and $\mathcal{U}_t$ are the domain index sets for the source and the target domains, and $\mathcal{U}$ is part of a metric space (i.e., a metric like the Euclidian distance is defined over the set). The input and labels are denoted as $\x$ and $y$, respectively. With access to the labeled data $\{(\x_i^s, y_i^s, u_i^s)\}_{i=1}^n$ from source domains ($u_i^s \in \mathcal{U}_s$) and unlabeled data $\{(\x_i^t, u_i^t)\}_{i=1}^m$ from target domains ($u_i^t \in \mathcal{U}_t$), the goal is to accurately predict the labels $\{(y_i^t)\}_{i=1}^m$ for data in the target domains. 

\textbf{Multi-Dimensional Domain Indices.} 
For clarity, we introduce our methods and theory in the context of unidimensional domain indices. However, they can directly apply to multi-dimensional domain indices. Later in \secref{sec:experiment}, we show that the ability of handling multi-dimensional domain indices brings further performance gains.

\subsection{Continuously Indexed Domain Adaptation (CIDA)}\label{sec:cida}
To perform adaptation across a continuous range of domains, we leverage the idea of learning domain-invariant encodings with adversarial training. We propose to learn an encoder\footnote{In general the encoder $E(\x,u)$ can be probabilistic. For example, $\z$ can be generated from a Gaussian distribution whose mean and variance are given by $E(\x,u)$.} $E$ and a predictor $F$ such that the distribution of the encodings $\z = E(\x) \in \ZM$ (or $\z = E(\x,u)$) from all domains $\mathcal{U}$ are aligned so that all labels can be accurately predicted by the shared predictor $F$. 
Formally, domain-invariant encodings require that $p(\z|u_1)=p(\z|u_2),\forall u_1,u_2\in\UM$. It implies that $\z$ and $u$ are independent ($u \indep \z$), i.e., $p(u | \z) = p(u)$ or equivalently $p(\z | u) = p(\z)$. This is achieved with the help of a discriminator $D$. In continuously indexed domains however, small changes in $u$ should lead to small changes in the encoding. Thus, instead of classifying the encoding into categorical domains, the discriminator $D$ in CIDA regresses the domain index. 

Formally, CIDA performs a minimax optimization with the value function $V(E,F,D)$ as:
\begin{align}
&\min\limits_{E,F} \max\limits_D V_p(E,F) - \lambda_d V_d(D,E), \label{eq:full_game}
\end{align}
where we have
\begin{align}
& V_p(E,F) \triangleq \EB^s [L_p(F(E(\x,u)),y)] \nonumber \\
& V_d(D,E) \triangleq \EB[L_d( D(E(\x, u)),u)] \nonumber
\end{align}
where $\EB$ and $\EB^s$ denote the expectations taken over the entire data distribution $p(\x,y,u)$ and the source data distribution $p^s(\x,y,u)$. Note that the label $y$ is only accessible in the source domains. $L_p$ is the prediction loss (e.g., cross-entropy loss for classification tasks), and $L_d$ is the domain index loss. $\lambda_d$ is a hyperparameter balancing both losses. The main difference between CIDA and traditional adversarial domain adaptation is that the discriminator loss $L_d$ is a monotonic function of the metric defined over $\mathcal{U}$.

\subsection{Variants of CIDA}\label{sec:pcida}
Note that there can be various designs for both $D$ and $L_d$. For example, $D$ can either directly predict the domain index or predict its mean and variance, and $L_d$ can be either the $L_2$ or $L_1$ loss. Different designs come with different theoretical guarantees. 

\textbf{Vanilla CIDA.} In the vanilla CIDA, $D$ directly predicts the domain index, and correspondingly $L_d$ is the $L_2$ loss between the predicted and ground-truth domain index,
\begin{align}
\label{eq:l2_loss}
L_d(D(\z),u) = (D(\z) - u)^2,
\end{align}
Vanilla CIDA above only guarantees matching the mean of the distribution $p(u|\z)$ (see theoretical results in \secref{sec:theory}). 

Therefore in the following, we introduce an advanced version, dubbed probabilistic CIDA (PCIDA), which enjoys better theoretical guarantees to match both the mean and variance of the distribution $p(u|\z)$. We note that PCIDA can be extended to match higher-order moments.

\textbf{Probabilistic CIDA.} The major improvement from CIDA to PCIDA is that in PCIDA, the discriminator predicts the distribution of $p(u|\z)$ instead of providing point estimation. We start with the simplest probabilistic model, Gaussian distributions, where the discriminator $D$ outputs the mean and variance of $p(u|\z)$ as $D_{\mu}(\z)$ and $D_{\sigma^2}(\z)$, respectively. To train such a discriminator, we use the negative log-likelihood as the loss function:
\begin{align}
\label{eq:guass_loss}
L_d(D(\z),u) = \frac{(D_{\mu}(\z) - u)^2}{2 D_{\sigma^2}(\z)} + \frac{1}{2}\log D_{\sigma^2}(\z),
\end{align}
where $D(\z)=(D_{\mu}(\z), D_{\sigma^2}(\z))$. 

\textbf{Extension to Gaussian Mixture Models.} PCIDA can be naturally extended from a single Gaussian to a Gaussian mixture model (GMM) by using a mixture density network as the discriminator $D$~\cite{MDN} and the corresponding negative log-likelihood as $L_d(\cdot,\cdot)$. 
\section{Theoretical Results}\label{sec:theory}
In this section, we provide theoretical guarantees for CIDA and PCIDA. As standard in adversarial domain adaption, we analyze a game in which the encoder aims to fool the discriminator and prevent it from inferring the domain index.  We first analyze a simplified game between the encoder and the discriminator (without the predictor) to gain insight of the aligned encodings. We then discuss the full three-player game and show our framework preserves the prediction power while aligning the encodings.

\subsection{Analysis for the Simplified Game}
We consider a simplified game which does not involve the predictor $F$, defined by the $V_d(E,D)$ term in \eqnref{eq:full_game}:
\begin{align}
\max\limits_{E} \min \limits_D V_d(E,D) = \mathbb{E} [L_d( D(E(\x, u)),u)]. \label{eq:simplified_game}
\end{align}

We first analyze the equilibrium of the simplified game for CIDA.
Recall that, in CIDA, the discriminator $D$ predicts the domain index $u$ given the encoding $\z$ and the domain index loss $L_d$ is the $L_2$ loss. We show that in the equilibrium of CIDA, the encoder will align the mean of the conditional domain distribution $p(u|\z)$ to the mean of the marginal domain distribution $p(u)$.

\lemref{lem:opt_dis_cida} below analyzes the discriminator $D$ with the encoder $E$ fixed and states that the optimal discriminator $D$ outputs the mean domain index of all data with the same encoding $\z$. 
\begin{lemma}[\textbf{Optimal Discriminator for CIDA}]\label{lem:opt_dis_cida}
For E fixed, the optimal D is
\begin{align*}
D^*_{E}(E(\x,u)) = \mathbb{E}_{u\sim p(u | \z)} [u],
\end{align*}
where $\z=E(\x,u)$.
\end{lemma}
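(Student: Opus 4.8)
The plan is to recognize this as the classical fact that, under squared loss, the conditional mean is the optimal predictor, and to exploit that $D$ may be chosen as an arbitrary function of the encoding $\z$. First I would rewrite the objective using the tower property. With $E$ fixed, the data $(\x,u)$ (together with any encoder randomness) induces a joint distribution over $(\z,u)$, where $\z=E(\x,u)$; since $L_d$ does not depend on the label $y$, we may integrate it out and condition on $\z$ to obtain
\[
V_d(E,D) = \mathbb{E}_{\z}\big[\,\mathbb{E}_{u\sim p(u|\z)}[(D(\z)-u)^2]\,\big].
\]

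The key observation is that the integrand depends on $D$ only through the scalar value $D(\z)$, and since $D$ ranges over arbitrary (measurable) functions of $\z$, minimizing the outer expectation is equivalent to minimizing the inner conditional expectation separately for each $\z$ in the support of $p(\z)$. Fixing $\z$ and treating $a=D(\z)$ as a free scalar, the inner objective is
\[
\mathbb{E}_{u\sim p(u|\z)}[(a-u)^2] = a^2 - 2a\,\mathbb{E}_{u\sim p(u|\z)}[u] + \mathbb{E}_{u\sim p(u|\z)}[u^2],
\]
a strictly convex quadratic in $a$. Setting its derivative to zero gives the unique minimizer $a = \mathbb{E}_{u\sim p(u|\z)}[u]$, and the second derivative equals $2>0$, confirming it is a minimum.

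Assembling the pointwise minimizers yields $D^*_E(\z)=\mathbb{E}_{u\sim p(u|\z)}[u]$ for $p(\z)$-almost every $\z$, which is exactly the claimed optimal discriminator. I expect the only delicate point to be the justification of the per-$\z$ optimization: because $D$ is unconstrained as a function of $\z$, the pointwise minimizers can be glued into a single function achieving the minimum, though a fully rigorous treatment would add a brief measurability/selection remark. The remaining quadratic computation is routine.
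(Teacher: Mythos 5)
Your proposal is correct and follows essentially the same route as the paper's proof: rewrite the objective via the tower property as an outer expectation over $\z$ of an inner conditional expectation, then minimize the resulting quadratic in $D(\z)$ pointwise, obtaining $D^*_E(\z)=\mathbb{E}_{u\sim p(u|\z)}[u]$. Your added remarks on the per-$\z$ gluing/measurability and the second-derivative check are minor refinements of the same argument, not a different approach.
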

\begin{proof}
With E fixed, the optimal D
\begin{align*}
D^*_{E} &= \argmin\limits_{D} \mathbb{E}_{(\x,u)\sim p(\x,u)} [L_d(D(E(\x, u)),u)]\\
&= \argmin\limits_{D} \mathbb{E}_{(\z,u)\sim p(\z,u)} [\|D(\z)-u\|_2^2]\\
&=\argmin\limits_{D} \mathbb{E}_{\z\sim p(\z)} \mathbb{E}_{u\sim p(u | \z)}[\|D(\z)-u\|_2^2]
\end{align*}
Notice that
\begin{align*}
&\mathbb{E}_{u\sim p(u | \z)}[(D(\z) - u)^2] \\ =&\mathbb{E}_{u\sim p(u | \z)}[u^2] - 2D(\z)\mathbb{E}_{u\sim p(u | \z)}[u] + D(\z)^2,
\end{align*}
is a quadratic form of $D(\z)$ which achieves the minimum at $D(\z)=\mathbb{E}_{u\sim p(u | \z)}[u]$.
\end{proof}

Assuming that $D$ always achieves its optimum w.r.t $E$ during the training, the minimax game in \eqnref{eq:simplified_game} can be reformulated as maximizing $C_d(E)$ where
\begin{align*}
C_d(E) & \triangleq \min\limits_D V_d(E,D) = V_d(E,D^*_E)\\
&= \mathbb{E}_{(\x,u)\sim p(\x,u)} (\mathbb{E}_{u\sim p(u | \z)} [u] - u)^2 \\
&= \mathbb{E}_{\z\sim p(\z)} \mathbb{E}_{u \sim p(u | \z)} (\mathbb{E}_{u\sim p(u | \z)} [u] - u)^2\\
&= \mathbb{E}_{\z\sim p(\z)} \VB_{u \sim p(u | \z)}[u] = \EB_\z \VB[u|\z],
\end{align*}
where $\VB$ denotes variance.

Next we analyze the virtual training criterion $C_d(E)$ for the encoder and derive the global optimum. 

\begin{lemma}[\textbf{Uniqueness of Constant Expectation}]\label{lem:unique_l2}
If there exists a constant $\mu_c$ such that $\mathbb{E}_{u\sim p(u | \z)}[u] = \mu_c$ for any $\z$, we have $\mu_c = \mathbb{E}_{u\sim p(u)}[u]$.
\end{lemma}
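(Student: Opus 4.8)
The plan is to recognize this statement as an instance of the law of total expectation (the tower property of conditional expectation), so that the conclusion follows by integrating the hypothesis against the marginal $p(\z)$. The single identity I want to exploit is $\mathbb{E}_{u\sim p(u)}[u] = \mathbb{E}_{\z\sim p(\z)}\big[\mathbb{E}_{u\sim p(u\mid\z)}[u]\big]$, and the hypothesis collapses the inner expectation to the constant $\mu_c$.

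Concretely, the first step is to write the marginal as a mixture over the encodings, $p(u) = \int p(u\mid\z)\,p(\z)\,d\z$, so that
\begin{align*}
\mathbb{E}_{u\sim p(u)}[u] = \int u\,p(u)\,du = \int \!\!\int u\,p(u\mid\z)\,p(\z)\,d\z\,du.
\end{align*}
The second step is to interchange the order of integration (justified by Fubini's theorem, assuming the first absolute moment of $u$ is finite, which I would state as a standing regularity assumption) and collect the inner integral as a conditional expectation:
\begin{align*}
\mathbb{E}_{u\sim p(u)}[u] = \int p(\z)\Big(\int u\,p(u\mid\z)\,du\Big)d\z = \int p(\z)\,\mathbb{E}_{u\sim p(u\mid\z)}[u]\,d\z.
\end{align*}

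The final step is to substitute the hypothesis $\mathbb{E}_{u\sim p(u\mid\z)}[u] = \mu_c$, valid for every $\z$, which pulls the constant out of the integral and leaves $\mu_c\int p(\z)\,d\z = \mu_c$, since $p(\z)$ is a probability density. This yields $\mathbb{E}_{u\sim p(u)}[u] = \mu_c$ as claimed.

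There is no genuine obstacle here: the result is essentially a one-line consequence of iterated expectation. The only point that merits care is the Fubini interchange, so if integrability is not already guaranteed by the problem setup I would either invoke the standing assumption that $u$ has a finite mean or note that the domain index set $\mathcal{U}$ is bounded in the applications of interest, which makes the interchange immediate. Everything else is routine, and I would expect the written proof to be only two or three lines long.
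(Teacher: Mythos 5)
Your proof is correct: the statement is exactly the law of total expectation, $\mathbb{E}_{u\sim p(u)}[u] = \mathbb{E}_{\z\sim p(\z)}\bigl[\mathbb{E}_{u\sim p(u|\z)}[u]\bigr]$, and substituting the constant $\mu_c$ for the inner expectation immediately gives the claim, which is the same (and essentially the only) route the paper takes, relegating the argument to its supplement. Your attention to the Fubini/integrability point is fine but inessential here, since the domain index set $\mathcal{U}$ is bounded in the paper's setting.
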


\begin{theorem}[\textbf{Global Optimum for CIDA}]\label{thm:iff_l2}
The global maximum of $C_d(E)$ is achieved if and only if the encoder $E$ satisfies that the expectations of the domain index $u$ over the conditional distribution $p(u|\z)$ for any given $\z$ are identical to the expectation over the marginal distribution $p(u)$, i.e., $\EB[u|\z]=\EB[u],\forall \z$.
\end{theorem}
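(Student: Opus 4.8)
The plan is to exploit the law of total variance to rewrite $C_d(E)$ as a fixed constant minus a non-negative penalty term, and then characterize exactly when that penalty vanishes. This turns the two-sided ``if and only if'' into a clean non-negativity argument plus an appeal to the preceding lemma.

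First I would recall from the derivation just above the theorem that $C_d(E) = \EB_\z \VB[u|\z]$, the expected conditional variance. Applying the law of total variance to the domain index $u$ gives the decomposition
\begin{align*}
\VB[u] = \EB_\z[\VB[u|\z]] + \VB_\z[\EB[u|\z]],
\end{align*}
which lets me rewrite the objective as $C_d(E) = \VB[u] - \VB_\z[\EB[u|\z]]$. The key observation is that $\VB[u]$ is the variance of the \emph{marginal} domain distribution $p(u)$, hence a constant that does not depend on the encoder $E$. Therefore maximizing $C_d(E)$ over $E$ is equivalent to minimizing the non-negative quantity $\VB_\z[\EB[u|\z]]$.

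The optimization step is then immediate. Since variance is non-negative, $\VB_\z[\EB[u|\z]] \geq 0$, so $C_d(E) \leq \VB[u]$ for every encoder, with equality precisely when $\VB_\z[\EB[u|\z]] = 0$. A random quantity (here $\EB[u|\z]$ viewed as a function of $\z \sim p(\z)$) has zero variance if and only if it is almost surely constant, so the global maximum is attained exactly when $\EB[u|\z] = \mu_c$ for some constant $\mu_c$ and for (almost) all $\z$. Finally I would invoke \lemref{lem:unique_l2} to pin down the value of this constant: if $\EB[u|\z]$ equals a constant $\mu_c$ for all $\z$, then necessarily $\mu_c = \EB[u]$. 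This closes both directions simultaneously, since $\EB[u|\z] = \mu_c = \EB[u]$ for all $\z$ is equivalent to $\VB_\z[\EB[u|\z]] = 0$, which is in turn equivalent to $C_d(E) = \VB[u]$, the global maximum.

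Honestly there is no deep obstacle here once the total-variance decomposition is in place; the entire proof rests on recognizing that identity and noting that $\VB[u]$ is $E$-independent. The only point requiring a little care is the equality case of the variance bound, which holds only up to a $p(\z)$-null set, and the appeal to \lemref{lem:unique_l2} to identify the constant as the marginal mean $\EB[u]$ rather than leaving it as an unspecified $\mu_c$.
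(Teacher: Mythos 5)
Your proof is correct and is essentially the paper's own argument in different packaging: the law-of-total-variance identity $C_d(E) = \VB[u] - \VB_\z[\EB[u|\z]]$ is exactly what the paper derives by expanding $C_d(E) - \VB[u]$ to $\EB[u]^2 - \EB_\z[\EB[u|\z]^2]$, and your appeal to non-negativity of $\VB_\z[\EB[u|\z]]$ (with equality iff $\EB[u|\z]$ is a.s.\ constant) is the same fact the paper obtains via Jensen's inequality on $x^2$, after which both proofs invoke \lemref{lem:unique_l2} to identify the constant as $\EB[u]$.
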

\begin{proof}
We first show $C_d(E) \leq \VB[u]$ and then show the equality is achieved when $\EB[u|\z]=\EB[u],\forall \z$.
\begingroup\makeatletter\def\f@size{9}\check@mathfonts
\begin{align*}
C_d(E) - \VB[u] 
&= \EB_\z \VB[u|\z] - \VB[u] \\
&= \EB_\z [\EB[u^2|\z] - \EB[u|\z]^2] - (\EB[u^2] - \EB[u]^2)\\
&= \EB[u]^2 - \EB_\z[\EB[u|\z]^2].
\end{align*}
\endgroup
By the convexity of $x^2$ and Jensen's inequality, we have $\EB[u]^2=(\EB_\z[\EB[u|\z]])^2 \leq \EB_\z[\EB[u|\z]^2] $ and the equality is achieved when $\EB[u|\z]$ is constant w.r.t. $\z$. By \lemref{lem:unique_l2} we have $\EB[u|\z]=\EB[u],\forall \z$.
\end{proof}

As \thmref{thm:iff_l2} states, the vanilla CIDA using the $L_2$ loss guarantees that the mean of the distribution $p(u| \z)$ matches the mean of the marginal distribution $p(u)$.
It means that there is a risk the encoder $E$ only aligns the mean of the distributions without exactly matching the entire distributions. However, surprisingly, we find that CIDA often achieves good empirical performance (see \secref{sec:experiment} for more details). Next, we analyze PCIDA and show that PCIDA enjoys better theoretical guarantees and matches both the mean and variance of the distribution $p(u|\z)$.

Recall that in PCIDA, the discriminator $D$ outputs the mean and variance of $p(u|\z)$ as $D_{\mu}(\z)$ and $D_{\sigma^2}(\z)$. We use the negative log-likelihood (\eqnref{eq:guass_loss}) as the domain loss $L_d$. We start from analyzing the discriminator $D$ when the encoder $E$ is fixed. \lemref{lem:opt_dis_pcida} states that the optimal discriminator $D$, given the encoding $\z$, will output the mean and variance of the domain index distribution $p(u | \z)$ .

\begin{lemma}[\textbf{Optimal Discriminator for PCIDA}]\label{lem:opt_dis_pcida} With E fixed, the optimal D is
\begin{align*}
D^*_{{\mu},E}(\z) &= \mathbb{E}_{u\sim p(u | \z)} [u],\\
D^*_{{\sigma^2},E}(\z) &= \mathbb{V}_{u\sim p(u | \z)} [u],
\end{align*}
where $\z=E(\x,u)$, and $D=(D_{\mu}, D_{\sigma^2})$.
\end{lemma}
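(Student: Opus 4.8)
The plan is to follow the same reduction as in the proof of \lemref{lem:opt_dis_cida}. By the law of total expectation the objective factors as $\mathbb{E}_{(\x,u)\sim p(\x,u)}[L_d(D(E(\x,u)),u)] = \mathbb{E}_{\z\sim p(\z)}\,\mathbb{E}_{u\sim p(u|\z)}[L_d(D(\z),u)]$, and since the discriminator may assign an arbitrary pair $(D_\mu(\z),D_{\sigma^2}(\z))$ to each value of $\z$ independently, it suffices to minimize the inner conditional expectation pointwise for each fixed $\z$. First I would fix $\z$, introduce scalar variables $m = D_\mu(\z)$ and $s = D_{\sigma^2}(\z)$ with $s>0$, and reduce the problem to minimizing
\[
g(m,s) = \frac{\mathbb{E}_{u\sim p(u|\z)}[(m-u)^2]}{2s} + \tfrac{1}{2}\log s .
\]

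Next I would minimize sequentially rather than jointly. For fixed $s>0$ the map $m\mapsto g(m,s)$ is a strictly convex quadratic minimized at $m^\star = \mathbb{E}_{u\sim p(u|\z)}[u]$, independently of $s$; this already gives the claimed form of $D^*_{\mu,E}$. Substituting $m=m^\star$ and using $\mathbb{E}_{u\sim p(u|\z)}[(m^\star-u)^2]=\mathbb{V}_{u\sim p(u|\z)}[u]$, the remaining one-variable problem is to minimize $h(s)=\frac{\mathbb{V}_{u\sim p(u|\z)}[u]}{2s}+\tfrac{1}{2}\log s$. Solving $h'(s)=0$ yields the unique interior stationary point $s^\star = \mathbb{V}_{u\sim p(u|\z)}[u]$, matching the claimed $D^*_{\sigma^2,E}$.

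The main obstacle to watch for is that $g$ is \emph{not} jointly convex in $(m,s)$: the quadratic-over-linear term is jointly convex, but the $\tfrac{1}{2}\log s$ term is concave, so global optimality cannot be read off from a single stationarity condition. I would handle this precisely through the sequential minimization above, which is legitimate because the inner minimizer $m^\star$ does not depend on $s$. To confirm that $s^\star$ is the global minimizer of $h$ rather than merely a critical point, I would examine the boundary behavior: assuming $\mathbb{V}_{u\sim p(u|\z)}[u]>0$, we have $h(s)\to+\infty$ both as $s\to 0^+$ and as $s\to\infty$, so the unique interior critical point must be the global minimum, with the second-derivative test corroborating $h''(s^\star)>0$. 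The only remaining caveat is the degenerate case $\mathbb{V}_{u\sim p(u|\z)}[u]=0$, where the negative log-likelihood is unbounded below; this does not arise for genuinely stochastic $p(u|\z)$ and can be excluded by assumption.
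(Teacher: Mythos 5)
Your proposal is correct and takes essentially the same route as the paper, which reduces the objective to a pointwise minimization of the conditional Gaussian negative log-likelihood over $(D_\mu(\z), D_{\sigma^2}(\z))$ for each fixed $\z$, exactly as in the proof of \lemref{lem:opt_dis_cida}. Your additional care about the lack of joint convexity (resolved via sequential minimization plus boundary analysis of $h$) and the degenerate case $\mathbb{V}_{u\sim p(u|\z)}[u]=0$, where the loss is unbounded below, is sound and slightly more rigorous than the paper's sketch.
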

Proof of \lemref{lem:opt_dis_pcida} is similar to that of \lemref{lem:opt_dis_cida} (see the Supplement for details).

Assuming discriminator $D$ always reaches optimum, the virtual training criterion $C_d(E)$ for the encoder becomes:
\begin{align*}
C_d(E) &= \min\limits_D V_d(E,D) \nonumber = V_d(E,D^*_E)\\
&= \EB_{\z,u} \left[\frac{(\EB[u|\z] - u)^2}{2\VB[u|\z]} + \frac{1}{2} \log(\VB[u|\z]) \right].
\end{align*}
Now we analyze $C_d(E)$ and provide PCIDA's global optimum.

\begin{lemma}[\textbf{Uniqueness of Constant Expectation and Variance}]\label{lem:unique}
If there exist constants $\mu_c$ and $\sigma_c^2$ such that $\mathbb{E}_{u\sim p(u | \z)}[u] = \mu_c$ and $\mathbb{V}_{u\sim p(u | \z)}[u] = \sigma_c^2$ for any $\z$, we have $\mu_c = \mathbb{E}_{u\sim p(u)}$ and $\sigma_c^2 = \mathbb{V}_{u\sim p(u)}[u]$.
\end{lemma}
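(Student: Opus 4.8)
The plan is to recognize that this lemma is the second-moment analogue of \lemref{lem:unique_l2}, and that both conclusions follow directly from the standard ``law of total'' identities relating the conditional moments of $u$ given $\z$ to the marginal moments of $u$. I would therefore split the statement into its two claims and dispatch them separately: the claim about $\mu_c$ is exactly \lemref{lem:unique_l2} and reuses the same tower-property argument, while the claim about $\sigma_c^2$ calls for the law of total variance.

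First I would establish $\mu_c = \EB_{u\sim p(u)}[u]$ via the tower property of expectation,
\[
\EB_{u\sim p(u)}[u] = \EB_{\z}\big[\EB_{u\sim p(u|\z)}[u]\big] = \EB_{\z}[\mu_c] = \mu_c,
\]
where the last step uses that the inner conditional expectation equals the constant $\mu_c$ for every $\z$ by hypothesis. This reproduces the argument already used for \lemref{lem:unique_l2}.

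Next I would handle the variance through the law of total variance,
\[
\VB_{u\sim p(u)}[u] = \EB_{\z}\big[\VB_{u\sim p(u|\z)}[u]\big] + \VB_{\z}\big[\EB_{u\sim p(u|\z)}[u]\big].
\]
By hypothesis the first (within-group) term equals $\EB_{\z}[\sigma_c^2] = \sigma_c^2$, while the second (between-group) term is the variance of a quantity that is constant in $\z$, since $\EB[u|\z] = \mu_c$ for all $\z$, and hence vanishes. Combining the two gives $\VB[u] = \sigma_c^2$, as required.

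The argument is essentially a direct application of two textbook decompositions, so I do not anticipate a genuine obstacle; the one point meriting care is that the between-group term $\VB_{\z}[\EB[u|\z]]$ is annihilated precisely by the constant-conditional-mean hypothesis. In other words, the variance conclusion genuinely relies on the mean assumption as well as the variance assumption, so the two parts must be proved in this order rather than independently.
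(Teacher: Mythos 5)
Your proof is correct, and it takes the same route the paper relies on (the paper relegates this lemma's proof to the supplement, but its argument for \thmref{thm:iff_l2} is built on exactly this law-of-total-expectation / law-of-total-variance decomposition): the tower property gives $\mu_c = \EB[u]$, and the total-variance identity $\VB[u] = \EB_\z[\VB[u|\z]] + \VB_\z[\EB[u|\z]]$ gives $\sigma_c^2 = \VB[u]$ once the between-group term is killed by the constant-mean hypothesis. Your closing remark is also the right subtlety to flag: the variance conclusion is not independent of the mean hypothesis, since without $\EB[u|\z]$ constant the term $\VB_\z[\EB[u|\z]]$ would not vanish.
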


\begin{theorem}[\textbf{Global Optimum for PCIDA}]\label{thm:iff_gaussian}
In PCIDA (with the Gaussian model), the global optimum is achieved if and only if the mean and variance of the distribution $p(u| \z)$ given any $\z$ are identical to those of the marginal distribution $p(u)$.
\end{theorem}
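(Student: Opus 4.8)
The plan is to follow the same template as the CIDA proof (\thmref{thm:iff_l2}): first collapse the optimized criterion $C_d(E)$ into a clean functional of the conditional moments, then upper bound it and characterize the equality case. First I would simplify $C_d(E)$ by carrying out the inner expectation over $u\sim p(u|\z)$. Since the numerator $(\EB[u|\z]-u)^2$ integrates to exactly $\VB[u|\z]$, it cancels the denominator $2\VB[u|\z]$ and leaves a constant $\tfrac12$, while the log term is constant in $u$. This yields
\begin{align*}
C_d(E) = \tfrac12 + \tfrac12\, \EB_\z[\log \VB[u|\z]],
\end{align*}
so maximizing $C_d(E)$ over $E$ is equivalent to maximizing $\EB_\z[\log \VB[u|\z]]$.

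Next I would upper bound this term in two stages, tracking the equality condition at each step. By the concavity of $\log$ and Jensen's inequality, $\EB_\z[\log \VB[u|\z]] \le \log \EB_\z[\VB[u|\z]]$, with equality if and only if $\VB[u|\z]$ is constant in $\z$. Then by the law of total variance, $\EB_\z[\VB[u|\z]] = \VB[u] - \VB_\z[\EB[u|\z]] \le \VB[u]$, with equality if and only if $\VB_\z[\EB[u|\z]] = 0$, i.e. if and only if $\EB[u|\z]$ is constant in $\z$. Chaining these two bounds (and using monotonicity of $\log$) gives $C_d(E) \le \tfrac12 + \tfrac12 \log \VB[u]$.

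Finally I would characterize tightness. Both inequalities become equalities simultaneously exactly when $\EB[u|\z]$ and $\VB[u|\z]$ are both constant in $\z$; \lemref{lem:unique} then forces these constants to equal $\EB[u]$ and $\VB[u]$ respectively, giving the ``only if'' direction. Conversely, if the conditional mean and variance match the marginal mean and variance for every $\z$, both are in particular constant, so both equality conditions hold and the maximum $\tfrac12 + \tfrac12\log\VB[u]$ is attained, giving the ``if'' direction.

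I expect the main delicacy to be the two-stage bounding argument rather than any single estimate: one must apply Jensen to the concave $\log$ (first stage) and the law of total variance (second stage) in the correct order, and verify that the two equality conditions are logically independent so that they can hold together. It is precisely this joint tightness — constant conditional mean \emph{and} constant conditional variance — that lets \lemref{lem:unique} pin the constants down to the marginal moments, which is the strengthening of the CIDA guarantee that the theorem asserts.
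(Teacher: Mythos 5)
Your proposal is correct and follows essentially the same route as the paper's proof: the inner expectation collapses the first term to the constant $\tfrac12$, Jensen's inequality on the concave $\log$ bounds the second term by $\tfrac12\log \EB_\z[\VB[u|\z]]$, the bound $\EB_\z[\VB[u|\z]] \le \VB[u]$ (which the paper imports from the proof of \thmref{thm:iff_l2} and you obtain equivalently via the law of total variance) gives the maximum $\tfrac12 + \tfrac12\log\VB[u]$, and \lemref{lem:unique} pins the constant conditional moments to the marginal ones. The only cosmetic difference is your explicit appeal to the law of total variance in place of the paper's Jensen-on-$x^2$ derivation, and your slightly more explicit treatment of the joint equality conditions in both directions.
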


\begin{proof}
Given that
\begin{align*}
C_d(E) = \underbrace{\EB_{\z,u}\left[\frac{(\EB[u|\z] - u)^2}{2\VB[u|\z]}\right]}_{C_1} + \underbrace{\EB_{\z,u}\left[\frac{1}{2} \log(\VB[u|\z])\right]}_{C_2},
\end{align*}
we analyze the upper bounds of the two terms separately.
For the first term, 
\begingroup\makeatletter\def\f@size{9}\check@mathfonts
\begin{align*}
C_1
&=\EB_{\z}\EB_{u|\z}\left[\frac{(\EB[u|\z] - u)^2}{2\VB[u|\z]}\right]=\EB_{\z}\left[\frac{\EB_{u|\z}(\EB[u|\z] - u)^2}{2\VB[u|\z]}\right]\\
&=\EB_{\z}\left[\frac{\VB[u|\z]}{2\VB[u|\z]}\right]=\EB_{\z} \frac{1}{2}=\frac{1}{2}.
\end{align*}
\endgroup
For the second term, by the concavity of $\log(x)$ and Jensen's inequality, we have that $2 C_2 \leq \log(\EB_\z [\VB[u|\z]])$ the equality holds when $\VB[u|\z]$ is constant w.r.t. $\z$. Further, in the proof of \thmref{thm:iff_l2}, we show that $\EB_\z [\VB[u|\z]] \leq \VB[u]$ and the maximum is achieved when $\EB[u|\z]$ is constant w.r.t. $\z$. 
Together with \lemref{lem:unique}, we then have that $C_d(E)$ reaches the global optimal $0.5 + 0.5 \log(\VB[u])$ if and only if $\EB[u|\z]=\EB[u]$ and $\VB[u|\z]=\VB[u]$ for all $\z$.
\end{proof}

\begin{cor}\label{thm:achieve_cida}
For both CIDA and PCIDA, the global optimum of $C_d(E)$ is achieved if the encoding of all domains (continuously indexed by $u$) are totally aligned, i.e., $\z \indep u$.
\end{cor}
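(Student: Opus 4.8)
The plan is to observe that total alignment is simply the strongest special case that makes the hypotheses of both \thmref{thm:iff_l2} and \thmref{thm:iff_gaussian} hold, so the corollary follows by directly invoking the sufficiency (``if'') direction of each theorem. Since the statement only claims an implication and not an equivalence, I would not reprove anything about $C_d(E)$ from scratch; I only need to translate the independence assumption into the moment conditions that those two theorems already characterize.

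Concretely, I would first unpack the meaning of $\z \indep u$. By the definition of independence the joint factorizes, $p(\z,u) = p(\z)\,p(u)$, hence the conditional collapses to the marginal, $p(u \mid \z) = p(u)$ for every $\z$. From this single fact every conditional moment equals the corresponding marginal moment: taking expectations under $p(u \mid \z) = p(u)$ gives $\EB[u \mid \z] = \EB[u]$ for all $\z$, and likewise $\VB[u \mid \z] = \VB[u]$ for all $\z$.

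For CIDA, the condition $\EB[u \mid \z] = \EB[u]$ for all $\z$ is exactly the characterization of the global maximum of $C_d(E)$ established in \thmref{thm:iff_l2}, so the optimum is attained. For PCIDA, the pair of conditions $\EB[u \mid \z] = \EB[u]$ and $\VB[u \mid \z] = \VB[u]$ for all $\z$ is precisely the characterization from \thmref{thm:iff_gaussian}, giving the optimal value $0.5 + 0.5 \log(\VB[u])$; hence the PCIDA optimum is attained as well. Combining the two cases yields the corollary.

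I expect no genuine analytic obstacle here, since the heavy lifting (the Jensen-inequality bounds together with the uniqueness results in \lemref{lem:unique_l2} and \lemref{lem:unique}) was already carried out inside the theorems. The only subtlety worth flagging is directional: independence is strictly stronger than matching just the first or the first two moments, so the implication runs one way only. This is exactly why the corollary is phrased as ``if'' rather than ``if and only if'' -- aligning the mean (CIDA) or the mean and variance (PCIDA) does not by itself force $p(u \mid \z) = p(u)$, whereas full alignment $\z \indep u$ trivially forces all conditional moments to match their marginal counterparts, which is what makes it sufficient.
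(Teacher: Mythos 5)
Your proposal is correct and matches the paper's intended argument: the paper states this corollary without an explicit proof precisely because it is an immediate consequence of \thmref{thm:iff_l2} and \thmref{thm:iff_gaussian}, with $\z \indep u$ giving $p(u\mid\z)=p(u)$ and hence matching conditional and marginal moments. Your additional remark about the implication being one-directional is a fair observation and consistent with how the corollary is phrased.
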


\begin{remark}[\textbf{Matching Higher-Order Moments}]
By \thmref{thm:iff_l2} and \thmref{thm:iff_gaussian}, we show that CIDA using the $L_2$ loss matches the mean of $p(u | \z)$ while the PCIDA with the Gaussian model matches both the mean and variance of $p(u | \z)$. Can we match higher-order moments? We believe our methodology can generalize to match higher-order moments by using PCIDA with more complex parametric probabilistic models. For example, one can use skew-normal distributions~\cite{azzalini2013skew} to match the third moment (skewness). 

\end{remark}

\subsection{Analysis of the Three-player Game}

We analyze the equilibrium state of the three-player game of $E, F$ and $D$ as defined in \eqnref{eq:full_game}. 
We divide the situation into two cases based on whether the domain index $u$ is independent of the label $y$.

\subsubsection{$u \indep y$}
The domain index $u$ is independent of the label $y$ when it captures nuisance variations that are irrelevant to the task of predicting the label $y$. In this case, we prove the following theorem showing that the optimal encoding captures all the information in the input $x$ that is relevant to the predictive tasks while aligning the domain index distributions.

\begin{lemma}[\textbf{Optimal Predictor}]
\label{thm:opt_predictor}
Given the encoder $E$, the prediction loss $V_p(F,E)\triangleq L_p(F(E(\x,u)),y) \geq H(y|E(\x,u))$ where $H(\cdot)$ is the entropy. The optimal predictor $F^*$ that minimizes the prediction loss is $F^*(E(\x,u))=P_y(\cdot|E(\x,u))$.
\end{lemma}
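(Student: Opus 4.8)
The plan is to treat $L_p$ as the cross-entropy loss (which is the only choice for which the conditional entropy $H(y\mid E(\x,u))$ is the natural lower bound), so that the predictor $F(\z)$ outputs a distribution over labels and $L_p(F(\z),y)=-\log [F(\z)]_y$, where $[F(\z)]_y$ is the probability assigned to the true label $y$ and $\z=E(\x,u)$. I also read $V_p(F,E)$ as the \emph{expected} loss, consistent with $V_p(E,F)\triangleq\EB^s[L_p(F(E(\x,u)),y)]$ defined earlier (the lemma statement drops the expectation, but the entropy bound only makes sense in expectation). First I would rewrite this expected loss as an iterated expectation over the encoding and then over the true conditional law of the label, namely $V_p(F,E)=\EB_{\z}\,\EB_{y\mid\z}\big[-\log[F(\z)]_y\big]$, which isolates the inner integral as a per-$\z$ quantity.

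The key step is the classical cross-entropy decomposition. For each fixed $\z$, writing $P_y(\cdot\mid\z)$ for the true conditional distribution of $y$ given $\z$, one has $\EB_{y\mid\z}[-\log[F(\z)]_y]=H(P_y(\cdot\mid\z))+\mathrm{KL}\big(P_y(\cdot\mid\z)\,\big\|\,F(\z)\big)$, i.e. the inner cross-entropy equals the entropy of the true posterior plus the Kullback--Leibler divergence between the true posterior and the predicted distribution. Taking the outer expectation over $\z$ turns the first term into exactly the conditional entropy, $\EB_\z[H(P_y(\cdot\mid\z))]=H(y\mid E(\x,u))$.

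Finally, since $\mathrm{KL}(\cdot\,\|\,\cdot)\ge 0$ with equality if and only if its two arguments coincide, dropping the (non-negative) divergence term yields $V_p(F,E)\ge H(y\mid E(\x,u))$, and the inequality is tight precisely when $F(\z)=P_y(\cdot\mid\z)$ for (almost) every $\z$. This simultaneously establishes the stated lower bound and identifies the minimizer $F^*(E(\x,u))=P_y(\cdot\mid E(\x,u))$. I do not expect a genuine technical obstacle in the calculation itself; the one point requiring care is the setup rather than the analysis, namely making precise that $L_p$ is the cross-entropy loss, that the relevant expectation and hence the entropy are taken over the source distribution (where $y$ is observed), and handling degenerate $\z$ with $P_y(\cdot\mid\z)$ concentrated or $P(\z)=0$ so that ``for all $\z$'' is read in the almost-everywhere sense.
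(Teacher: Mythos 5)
Your proof is correct. The paper gives no proof of this lemma in the main text (it is deferred to the supplement), but your argument is the canonical one this statement rests on: write the expected loss as $\EB_{\z}\EB_{y|\z}\bigl[-\log[F(\z)]_y\bigr]$, decompose the inner cross-entropy as $H\bigl(P_y(\cdot|\z)\bigr)+\mathrm{KL}\bigl(P_y(\cdot|\z)\,\|\,F(\z)\bigr)$, and use nonnegativity of the KL term, which vanishes exactly at $F(\z)=P_y(\cdot|\z)$. You also correctly repair the paper's loose statement on two points it glosses over: the lemma only holds for the expected loss (the displayed $V_p$ drops the expectation that the paper's own definition of $V_p$ includes), and it only pins down the log/cross-entropy choice of $L_p$, since the conditional-entropy lower bound fails for other losses the paper nominally allows.
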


Assuming the predictor $F$ and the discriminator $D$ are trained to achieve their optimal losses, by \lemref{thm:opt_predictor}, the three-player game (\eqnref{eq:full_game}) can be rewritten as following training procedure of the encoder $E$,
\vspace{-1.5mm}
\begin{align}
\min_E C(E) \triangleq H(y|E(\x,u)) - \lambda_d C_d(E).
\end{align}

\begin{theorem} \label{thm:full_game} If the encoder $E$, the predictor $F$ and the discriminator $D$ have enough capacity and are trained to reach optimum, any global optimal encoder $E^*$ has the
following properties:
\begin{subequations}
	\begin{align}
	H(y|E^*(\x, u)) = H(y|\x, u) \label{optimal-encoder-a}\\
	C_d(E^*) = \max_{E'} C_d(E') \label{optimal-encoder-b}
	\end{align}
\end{subequations}
\end{theorem}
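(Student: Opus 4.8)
The plan is to exploit the reduction already set up: once $F$ and $D$ are at their optima, \lemref{thm:opt_predictor} collapses the three-player game to $\min_E C(E)$ with $C(E) = H(y|E(\x,u)) - \lambda_d C_d(E)$. I would then lower-bound the two terms of $C(E)$ separately, add them to get a lower bound on $C(E)$ valid for every encoder, and finally exhibit one encoder that attains both bounds at once. This last construction is where the hypothesis $u \indep y$ is essential.

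For the prediction term, since $\z = E(\x,u)$ is (possibly stochastically) produced from $(\x,u)$, the triple forms a Markov chain $y - (\x,u) - \z$, so the data-processing inequality gives $I(y;\z) \leq I(y;\x,u)$ and hence
\begin{align*}
H(y|E(\x,u)) = H(y) - I(y;\z) \geq H(y) - I(y;\x,u) = H(y|\x,u)
\end{align*}
for every $E$. The alignment term is bounded trivially by its supremum, $C_d(E) \leq \max_{E'} C_d(E')$, which is attained (e.g., by any $E'$ with $\z \indep u$, via \crlref{thm:achieve_cida}). Combining,
\begin{align*}
C(E) \geq H(y|\x,u) - \lambda_d \max_{E'} C_d(E').
\end{align*}

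The crux is to construct a single witness encoder $\tilde{E}$ that is at once sufficient for $y$ (so $H(y|\tilde{E}(\x,u)) = H(y|\x,u)$) and perfectly domain-aligned (so $\tilde{\z} \indep u$, giving $C_d(\tilde{E}) = \max_{E'} C_d(E')$ by \crlref{thm:achieve_cida}). Here I would use two facts: $\tilde{E}$ is permitted to read the index $u$, and $u \indep y$. Because the label carries no information about the domain, the $y$-relevant content of $(\x,u)$ and the $u$-driven variation are separable, so an encoder with enough capacity can retain the former while cancelling the latter---in the simplest case by subtracting a $u$-dependent term, and in general by mapping to a sufficient statistic for $y$ that has been purged of its $u$-dependence. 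I expect the main obstacle to be making this existence claim rigorous rather than illustrative: one must verify that sufficiency and alignment are genuinely compatible, which is precisely the role of $u \indep y$---without it, deleting all $u$-information could also destroy label-relevant signal, and the two bounds could not be met simultaneously.

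Finally, with $\tilde{E}$ attaining the lower bound, $\min_E C(E) = H(y|\x,u) - \lambda_d \max_{E'} C_d(E')$. Any global optimal $E^*$ achieves this same value, so writing $a \triangleq H(y|E^*(\x,u)) - H(y|\x,u) \geq 0$ and $b \triangleq \max_{E'} C_d(E') - C_d(E^*) \geq 0$ gives $a + \lambda_d b = 0$; since $\lambda_d > 0$ and $a,b \geq 0$, both vanish, which is exactly \eqnref{optimal-encoder-a} and \eqnref{optimal-encoder-b}.
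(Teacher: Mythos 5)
Your proof skeleton is identical to the paper's: reduce the three-player game via \lemref{thm:opt_predictor} to $\min_E C(E)$, lower-bound $H(y|E(\x,u))$ by $H(y|\x,u)$ via data processing, bound $C_d(E)$ by its maximum, and then argue that any global optimizer must attain both bounds simultaneously (your ``$a+\lambda_d b=0$ with $a,b\geq 0$'' step is exactly the paper's ``equality holds if and only if'' step). What is missing is the single piece of the argument that carries all the mathematical content: the witness encoder. You assert that, because $u \indep y$, an encoder that is simultaneously sufficient for $y$ and independent of $u$ should exist (``the $y$-relevant content and the $u$-driven variation are separable''), you correctly flag that making this rigorous is the main obstacle---and then you stop there. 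Without an explicit witness, the lower bound $H(y|\x,u)-\lambda_d \max_{E'}C_d(E')$ might be strictly smaller than $\min_E C(E)$, in which case nothing forces a global optimizer $E^*$ to satisfy either \eqnref{optimal-encoder-a} or \eqnref{optimal-encoder-b}, and your final forcing argument never gets off the ground. So as it stands this is a genuine gap, not a stylistic omission.

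The paper closes this gap with a concrete construction: $E_0(\x,u) = P_y(\cdot|\x,u)$, the encoder whose output is the label posterior itself. Sufficiency, $H(y|E_0(\x,u)) = H(y|\x,u)$, is immediate because the posterior determines the conditional law of $y$ given $(\x,u)$. Alignment, $E_0(\x,u) \indep u$ (and hence $C_d(E_0)=\max_{E'}C_d(E')$ by \crlref{thm:achieve_cida}), is where the subsection's standing assumption $u \indep y$ is invoked. Two remarks: first, your instinct that this is the delicate step is well founded---the paper itself dispatches it with ``it can be examined,'' and the claim is not automatic, since $y \indep u$ alone does not in general make the posterior map independent of $u$ (if $\x$ is informative about $y$ only for some values of $u$, the distribution of $P_y(\cdot|\x,u)$ varies with $u$); second, regardless of how one judges that subtlety, a complete proof must actually exhibit this witness (or another one), and producing $E_0$ is precisely what distinguishes the paper's proof from your proposal.
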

\begin{proof}
	Since $E(\x, u)$ is a function of $\x, u$, by the data processing inequality, we have $H(y|E(\x, u)) \geq H(y|\x, u)$.

	Hence, $C(E) = H(y|E(\x, u)) - \lambda_d C_d(E) \geq H(y|\x,u) - \lambda_d \max_{E'} C_d(E')$. The equality holds if and only if $H(y|\x,u)=H(y|E(\x,u))$ and $C_d(E)=\max_{E'} C_d(E')$. Therefore, we only need to prove that the optimal value of $C(E)$ is equal to $H(y|\x,u) - \lambda_d \max_{E'} C_d(E')$ in order to prove that any global encoder $E^*$ satisfies both \eqnref{optimal-encoder-a} and \eqnref{optimal-encoder-b}.
	
	We show that $C(E)$ can achieve $H(y|\x,u) - \lambda_d \max_{E'} C_d(E')$ by considering the following encoder $E_0$: $E_0(\x,u) = P_y(\cdot|\x, u)$. It can be examined that $H(y|E_0(\x,u)) = H(y|\x,u)$ and $E_0(\x,u) \indep u$ which leads to $C(E_0)=\max_{E'}C(E')$ using \crlref{thm:achieve_cida}.
\end{proof}

\thmref{thm:full_game} shows that, at the equilibrium, the optimal encoder preserves all the information about label $y$ contained in the data $\x$ and the domain index $u$ while aligning the encoding cross domains.

Note that in general the encoder $E$ is a probabilistic encoder that generates $\z$ stochastically. For example, one can use a probabilistic encoder parameterized by a natural-parameter network~\cite{NPN} and generate $\z$ using the reparameterization trick~\cite{VAE}. Empirically, we find that directly using a deterministic encoder also works favorably and therefore keep the encoder deterministic in~\secref{sec:experiment} for simplicity. 

\subsubsection{$u \not\!\perp\!\!\!\perp y$}
The domain index $u$ is dependent of the label $y$ when it contains information relevant to predicting $y$.
In this case, discretization of the inherently continuous domain index $u$ is necessary to perform categorical domain adaptation. However, this discretization inevitably loses information in $u$ and could hurt the predictive task since $u \not\!\perp\!\!\!\perp y$. In contrast, our methods CIDA/PCIDA performs domain adaption with the continuous domain index $u$, thus, can fully retain information in $u$ that is relevant to the label $y$.
\section{Experiments}\label{sec:experiment}
We evaluate CIDA and its variants on two toy datasets, one image dataset (\emph{Rotating MNIST}), and three real-world medical datasets. These empirical studies verify our theoretical findings in \secref{sec:method} and show that:
\begin{Itemize}
\item Using categorical domain adaption to align continuously indexed domains leads to poor alignment with marginal (or no) performance gain compared to no adaptation.
\item CIDA aligns domains with continuous indices and achieves significant performance boost compared to categorical domain adaption methods.
\item PCIDA's ability to predict a distribution rather than a single value is helpful in avoiding bad equilibriums and improving prediction performance.
\item  The performance gains of CIDA and PCIDA increase with multi-dimensional domain indices.
\end{Itemize}

\begin{figure*}[!tb]
\vspace{-0pt}
\centering     
\subfigure[Domains]{\includegraphics[width=0.23\textwidth]{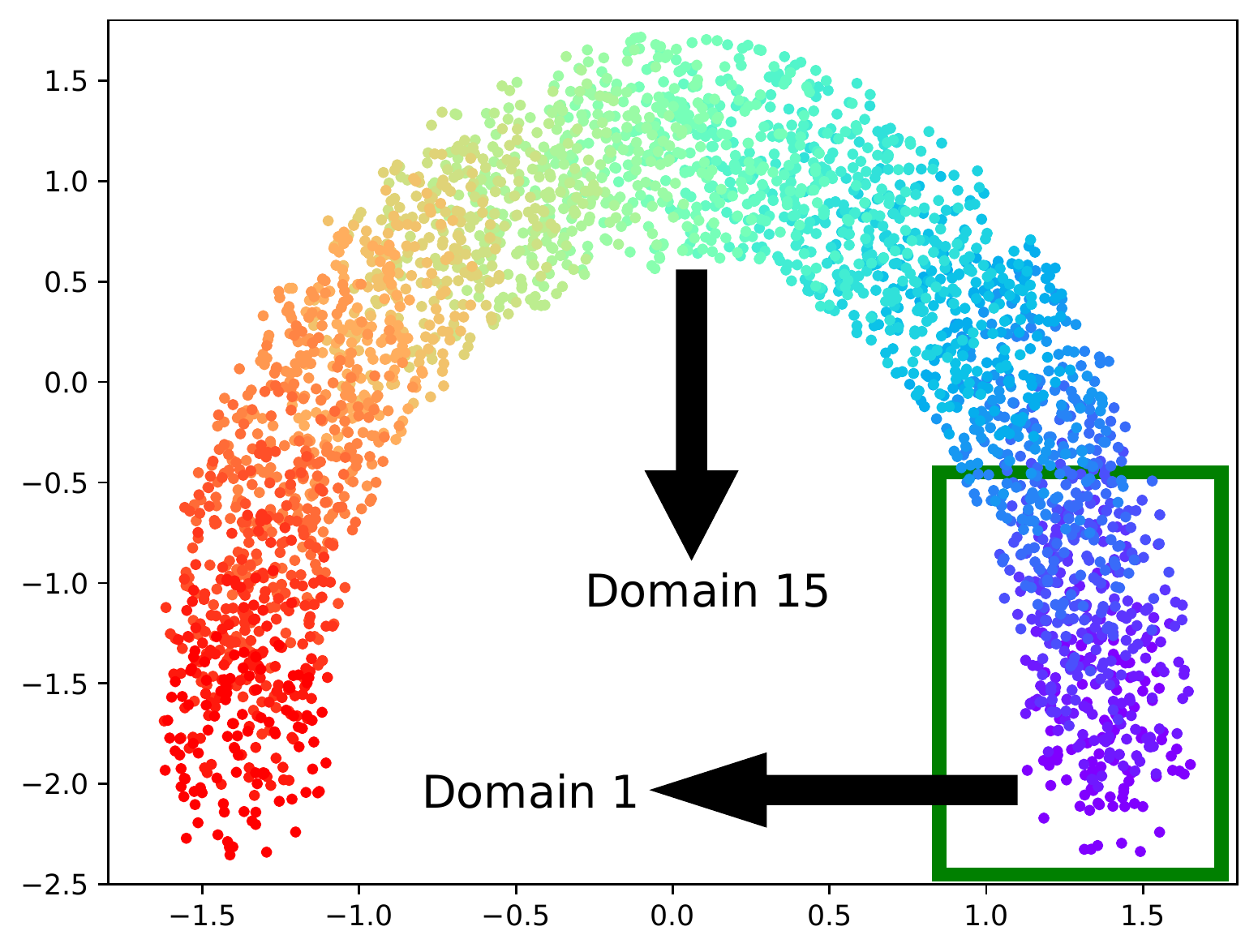}}
\subfigure[Ground Truth]{\includegraphics[width=0.23\textwidth]{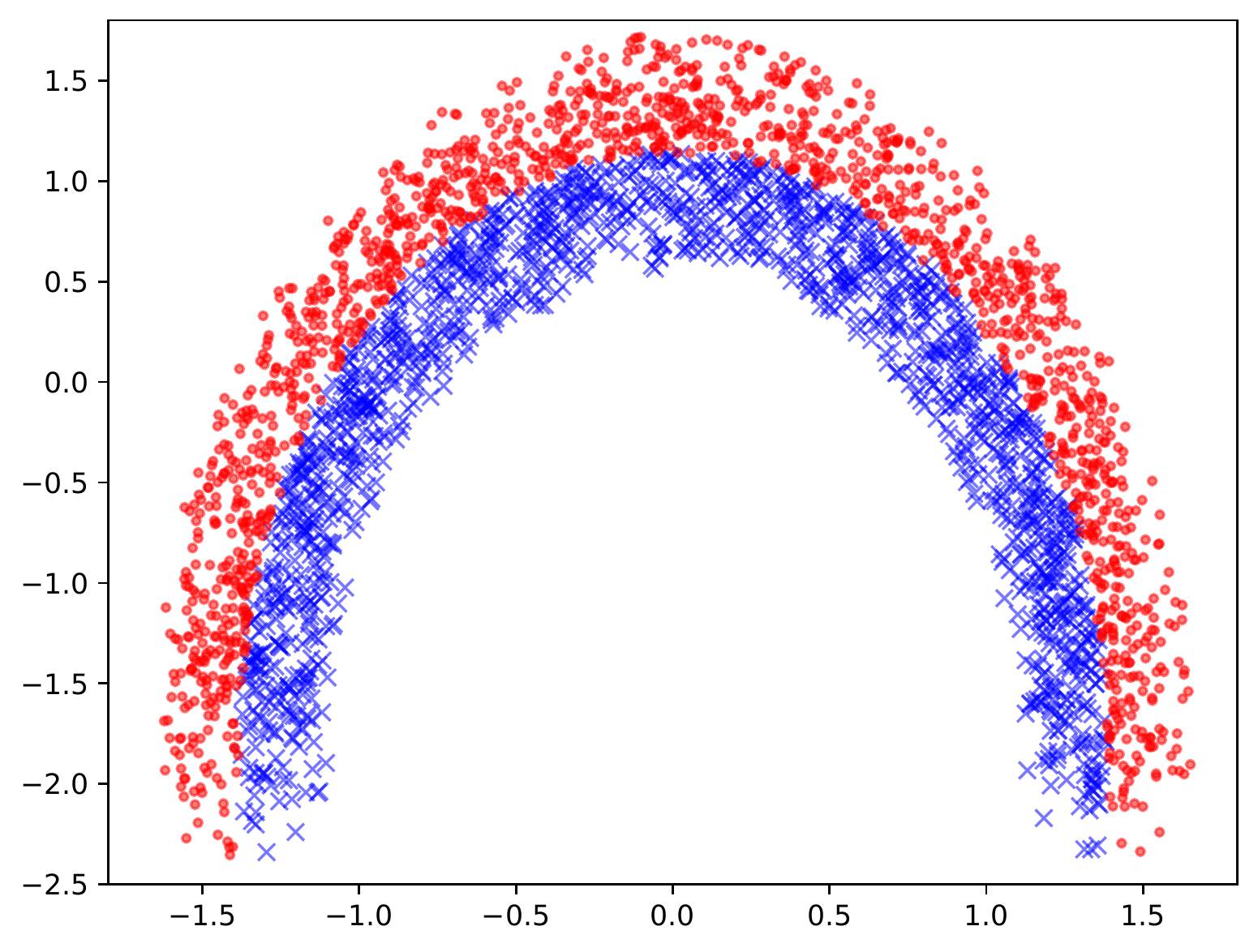}}
\subfigure[DANN]{\includegraphics[width=0.23\textwidth]{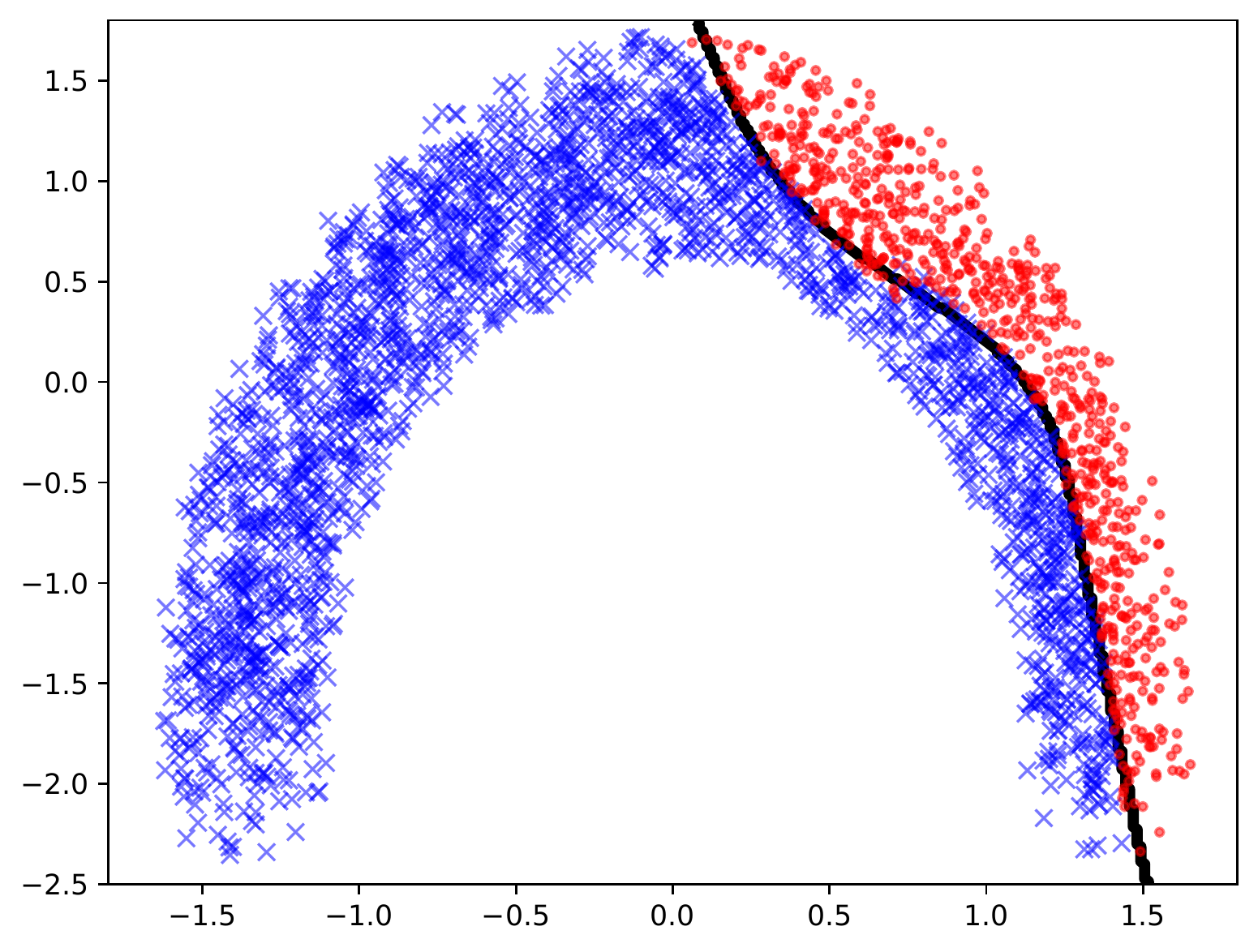}}
\subfigure[CDANN]{\includegraphics[width=0.23\textwidth]{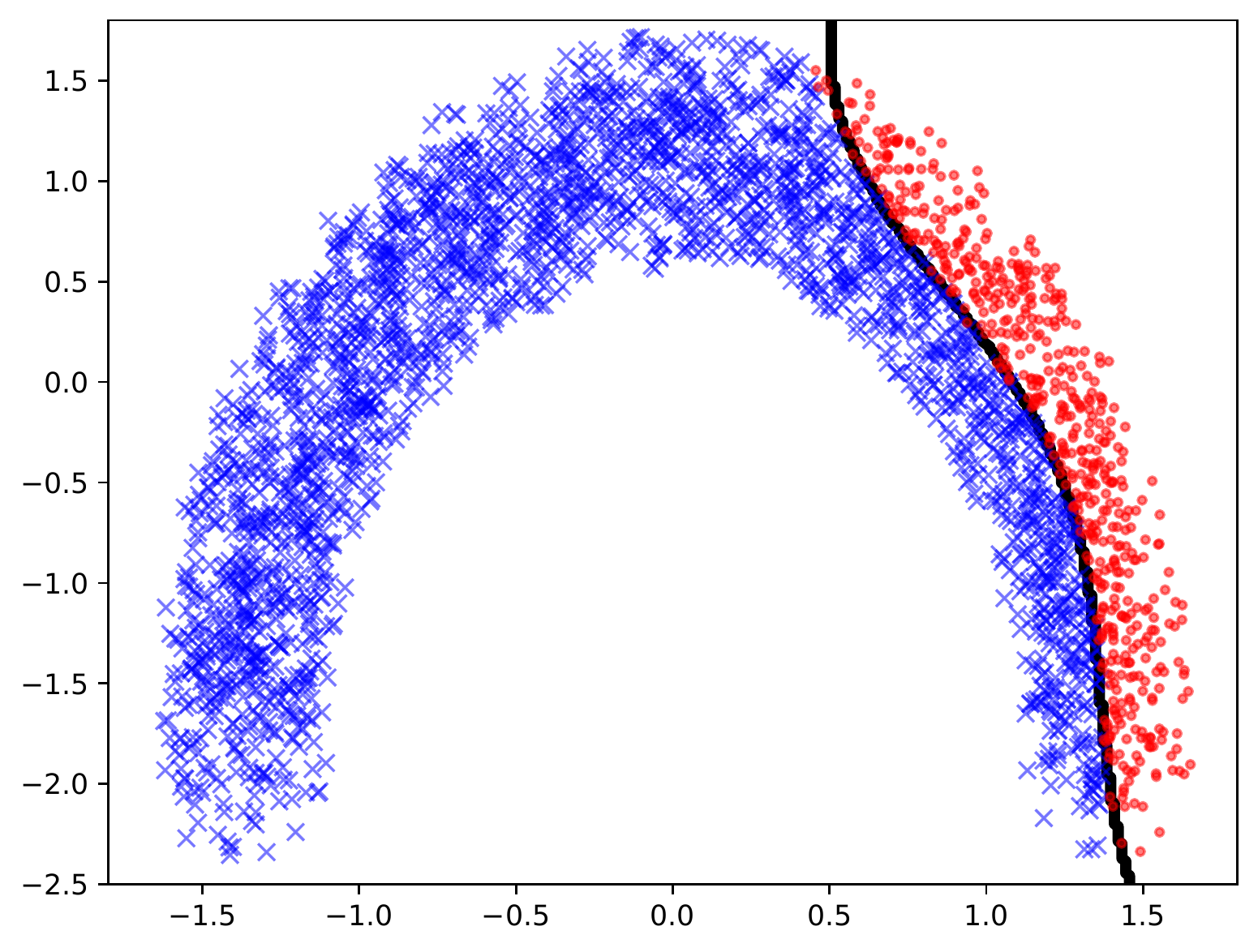}}
\vskip -0.3cm
\subfigure[ADDA]{\includegraphics[width=0.23\textwidth]{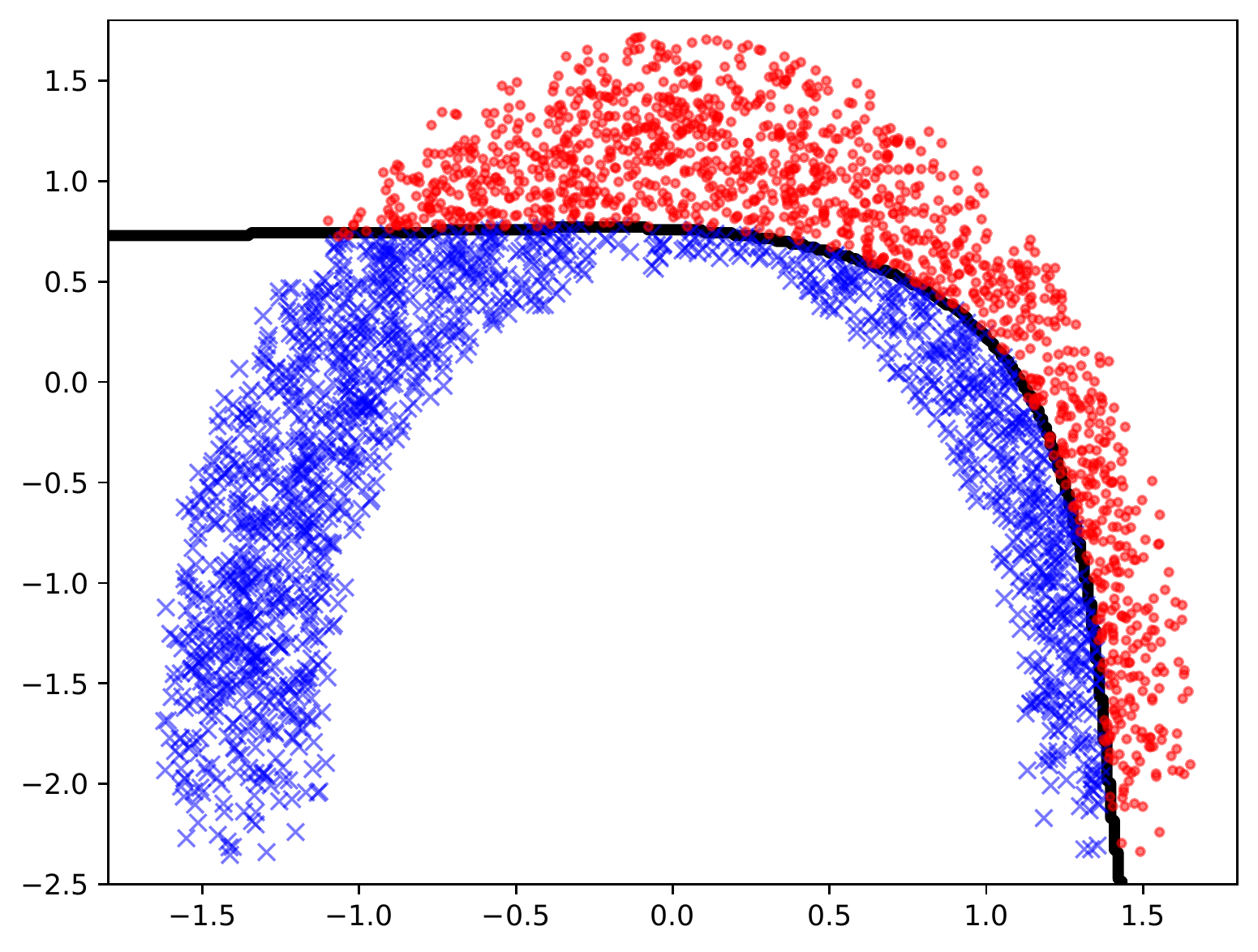}}
\subfigure[MDD]{\includegraphics[width=0.23\textwidth]{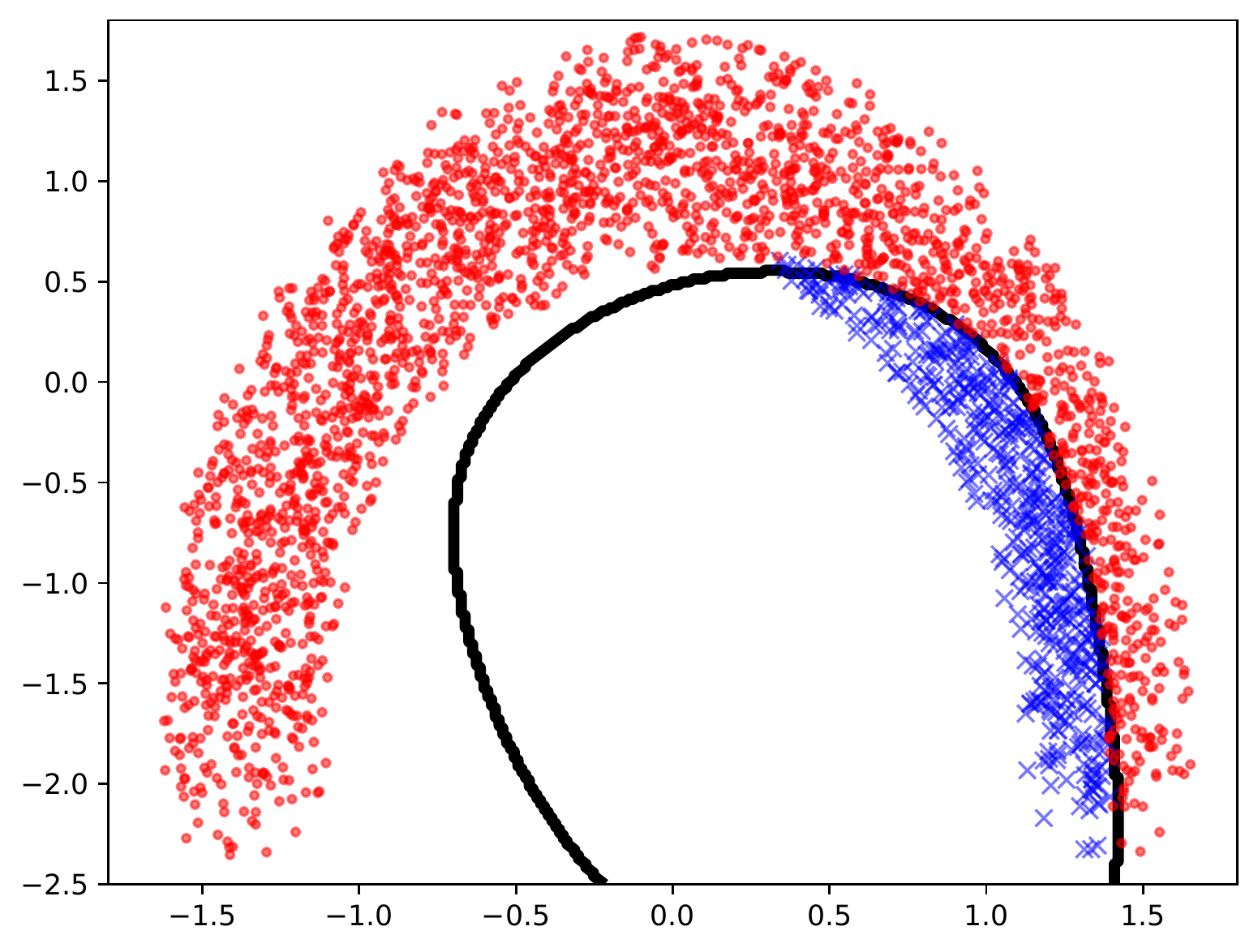}}
\subfigure[CUA]{\includegraphics[width=0.23\textwidth]{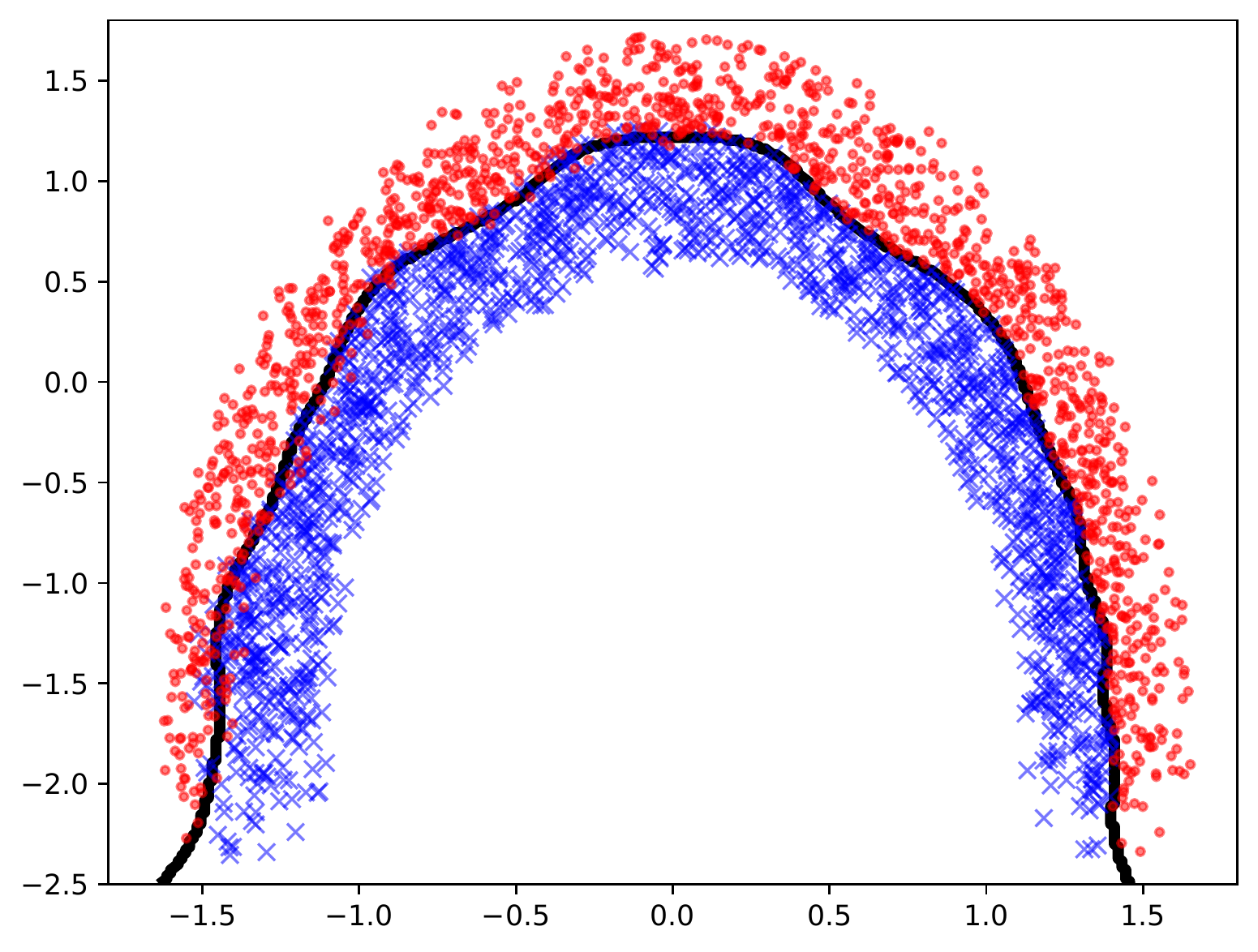}}
\subfigure[CIDA (Ours)]{\includegraphics[width=0.23\textwidth]{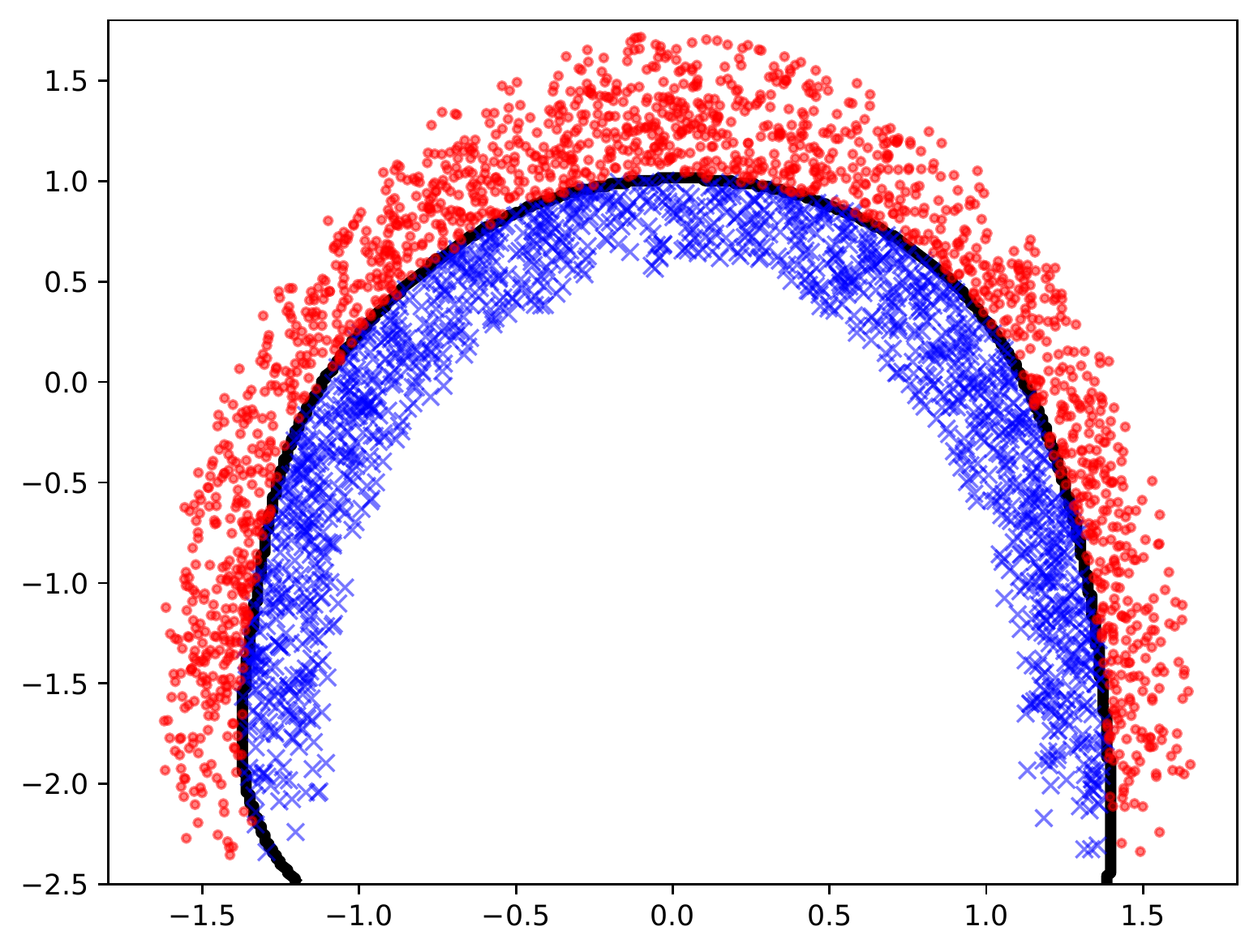}}
\vspace{-12pt}
\caption{Results on the \emph{Circle} dataset with $30$ domains. \figref{fig:toy-circle}(a) shows domain index by color. The first $6$ domains are source domains, marked by green boxes. Red dots and blue crosses are positive and negative data samples. Black lines show the decision boundaries generated according to model predictions.}
\label{fig:toy-circle}
\vspace{-10pt}
\end{figure*}

\begin{figure*}[!tb]
\vspace{-0pt}
\centering     
\subfigure[Domains]{
\includegraphics[width=0.23\textwidth]{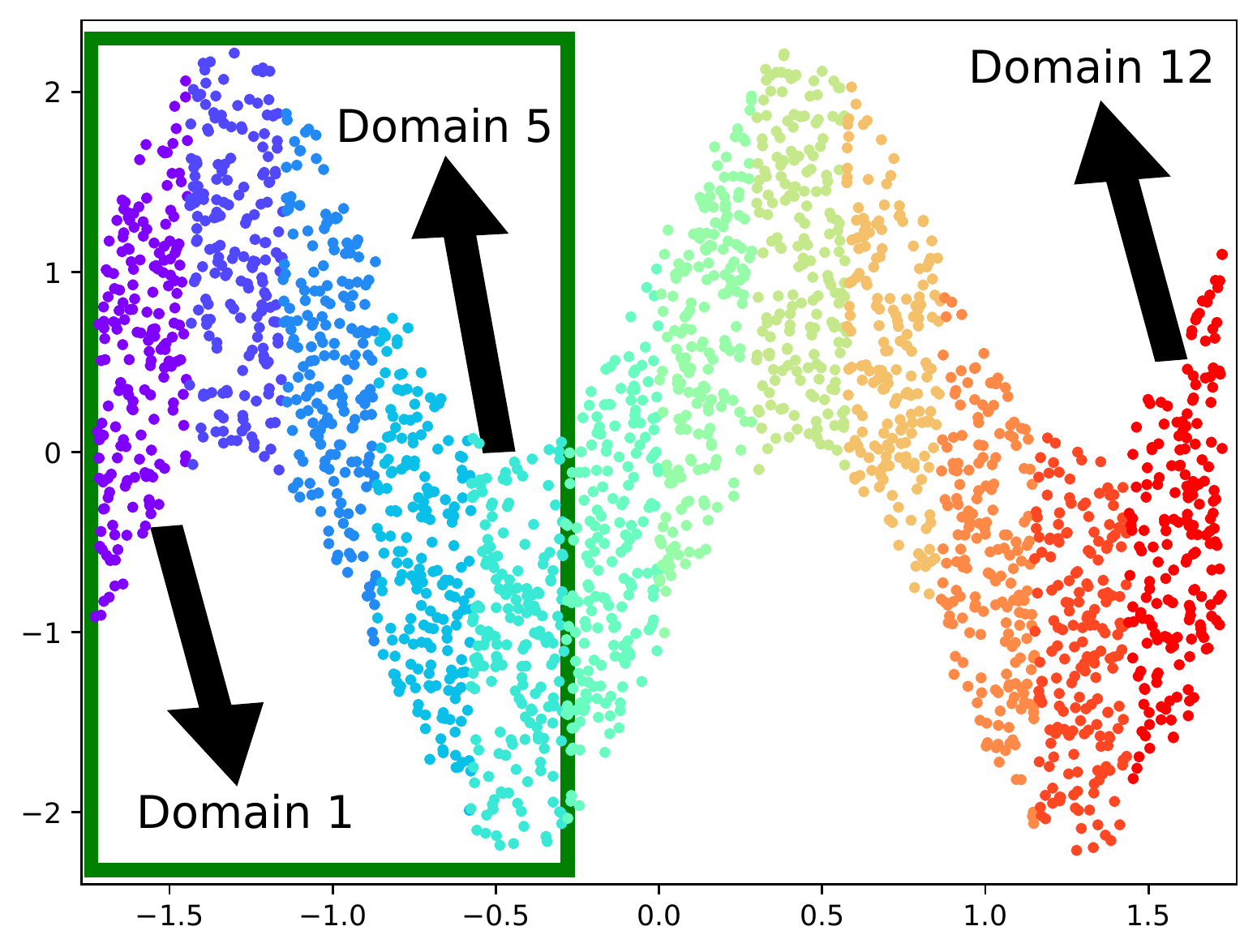}}
\subfigure[Ground Truth]{\includegraphics[width=0.23\textwidth]{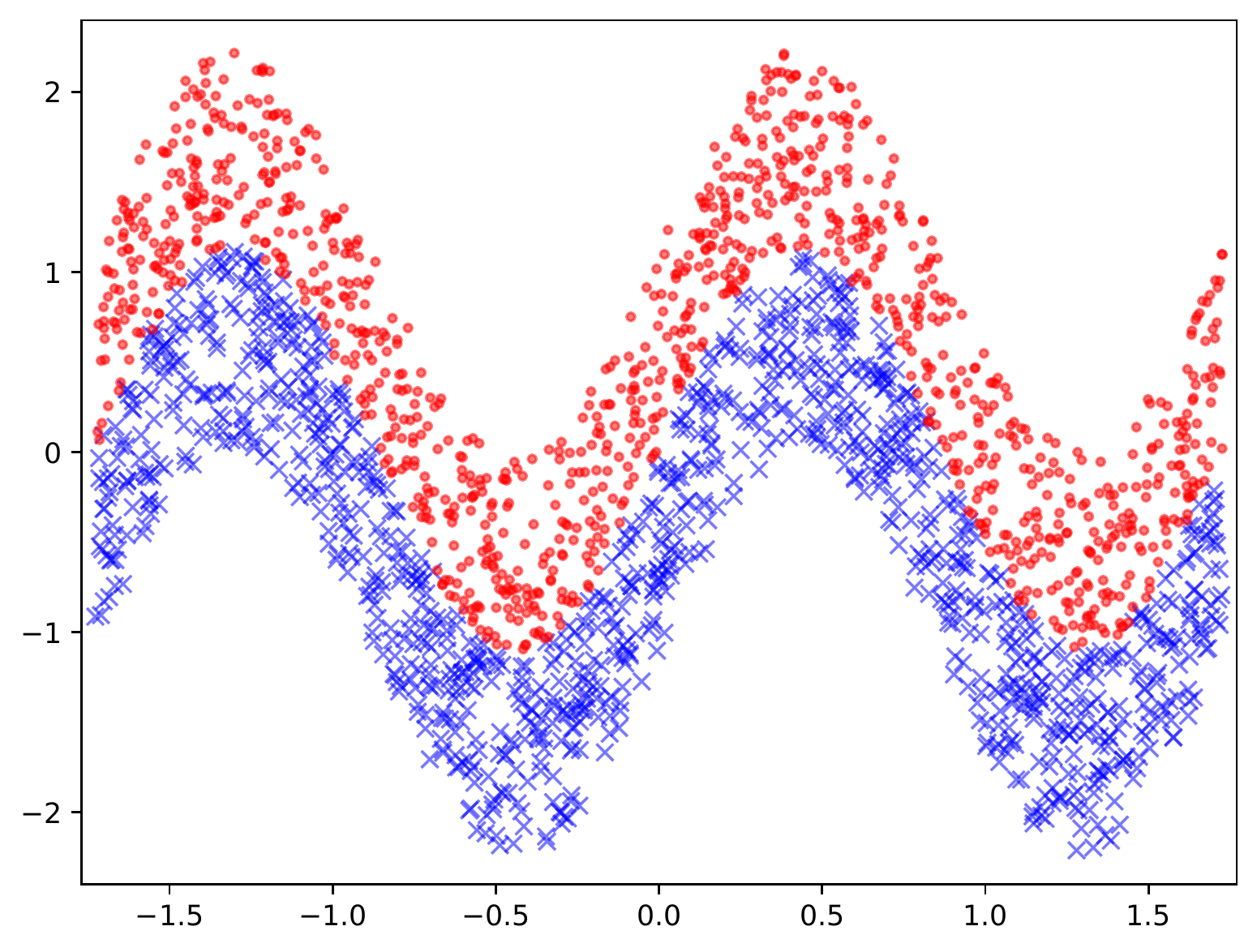}}
\subfigure[DANN]{\includegraphics[width=0.23\textwidth]{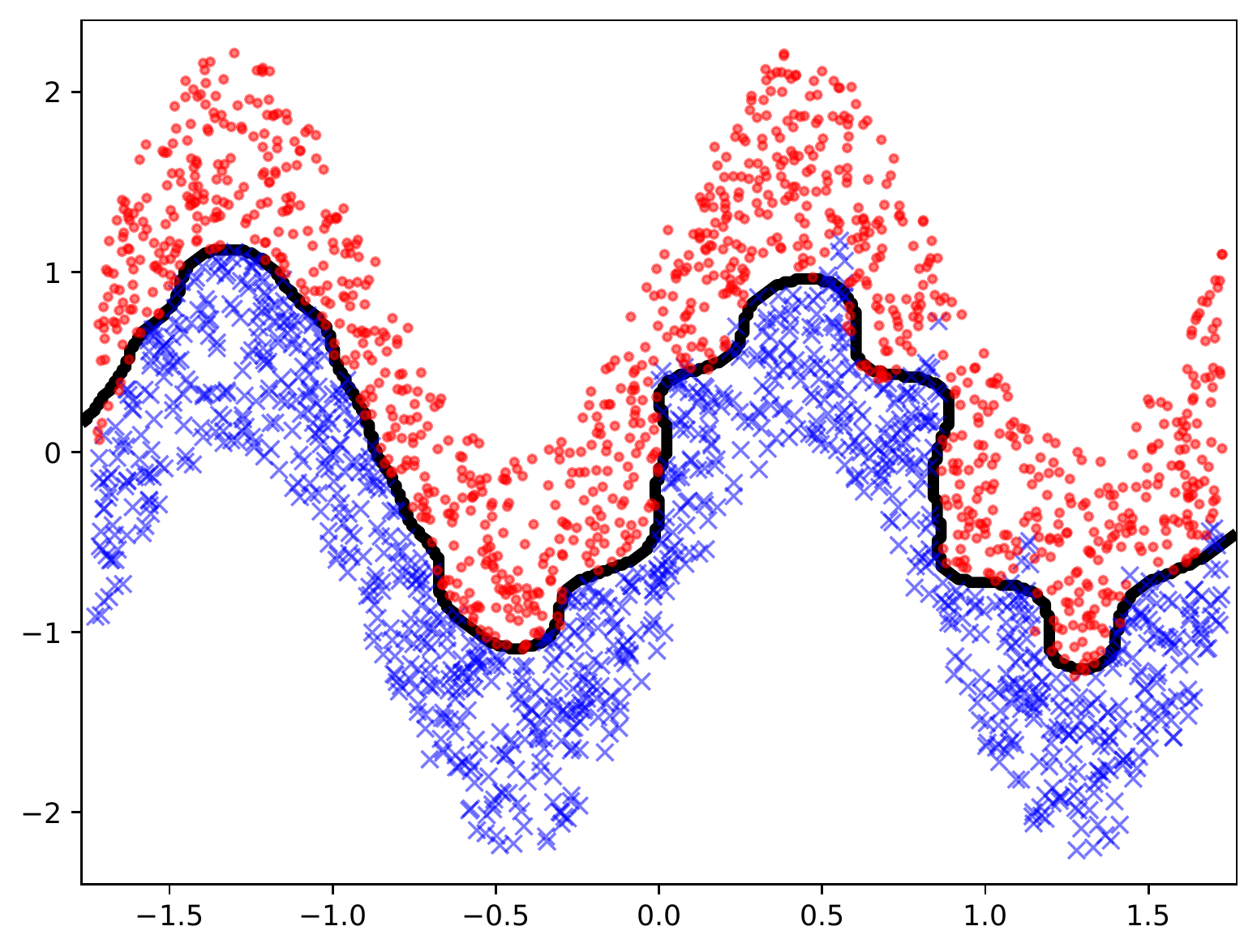}}
\subfigure[CDANN]{\includegraphics[width=0.23\textwidth]{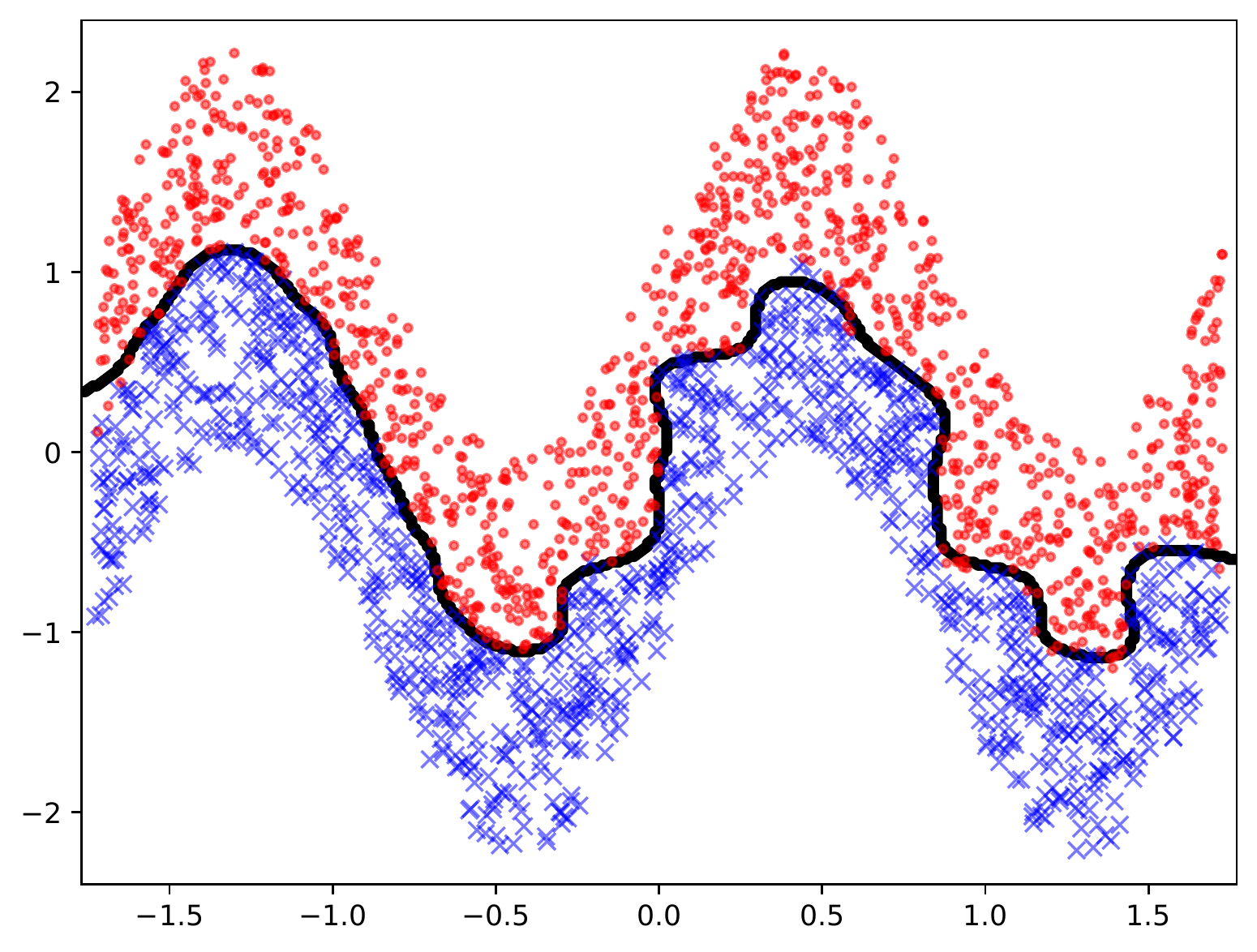}}
\vskip -0.3cm
\subfigure[ADDA]{\includegraphics[width=0.23\textwidth]{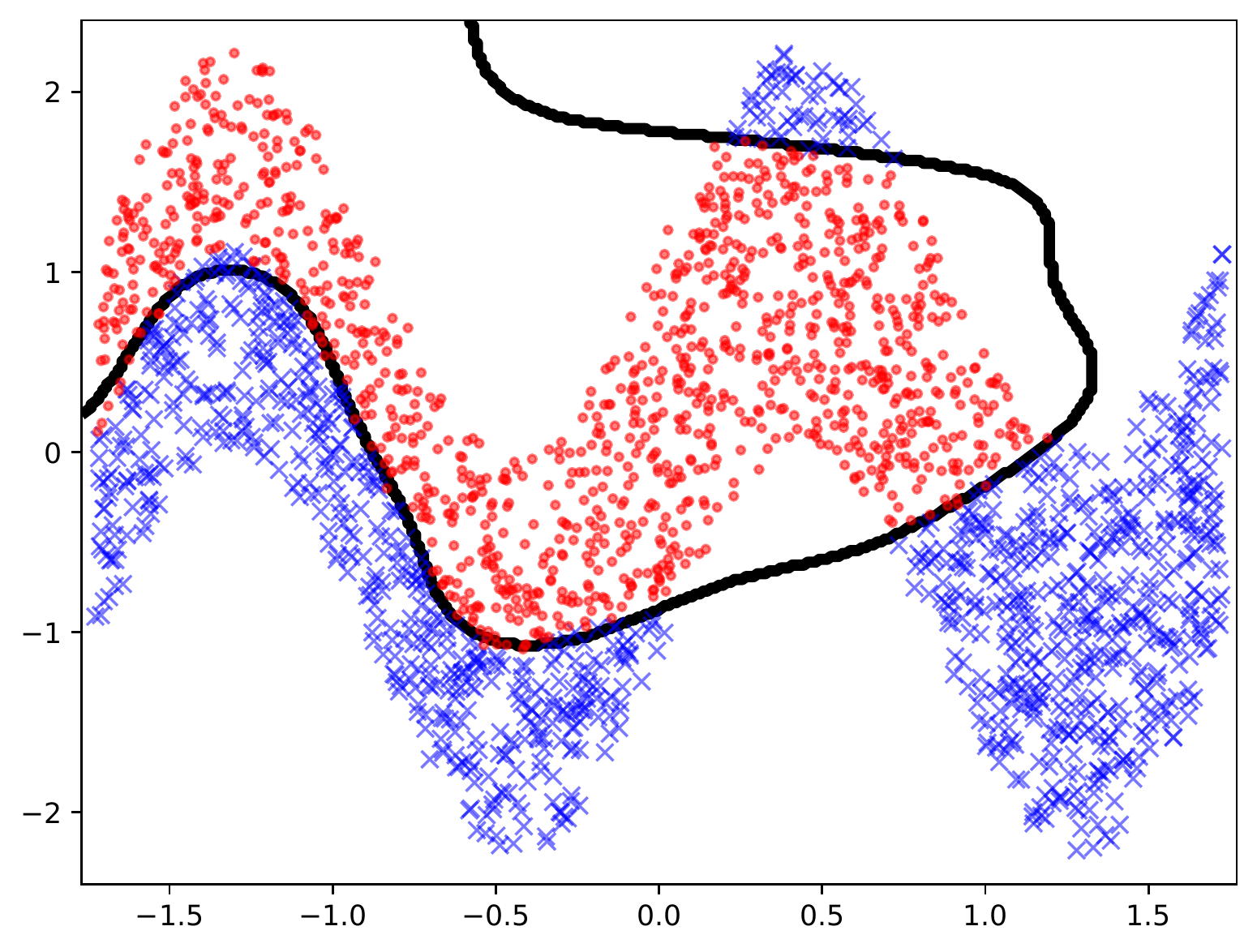}}
\subfigure[MDD]{\includegraphics[width=0.23\textwidth]{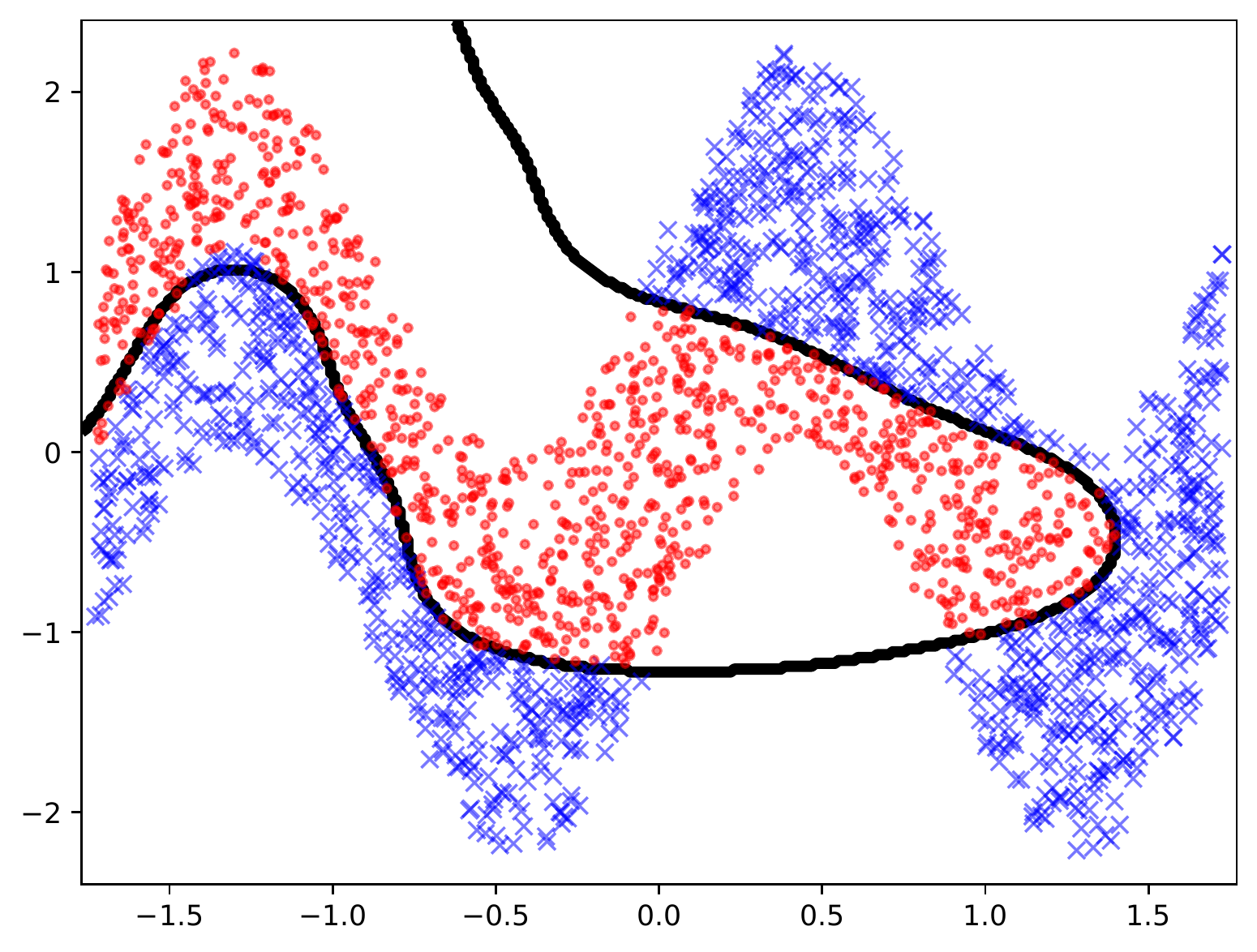}}
\subfigure[CUA]{\includegraphics[width=0.23\textwidth]{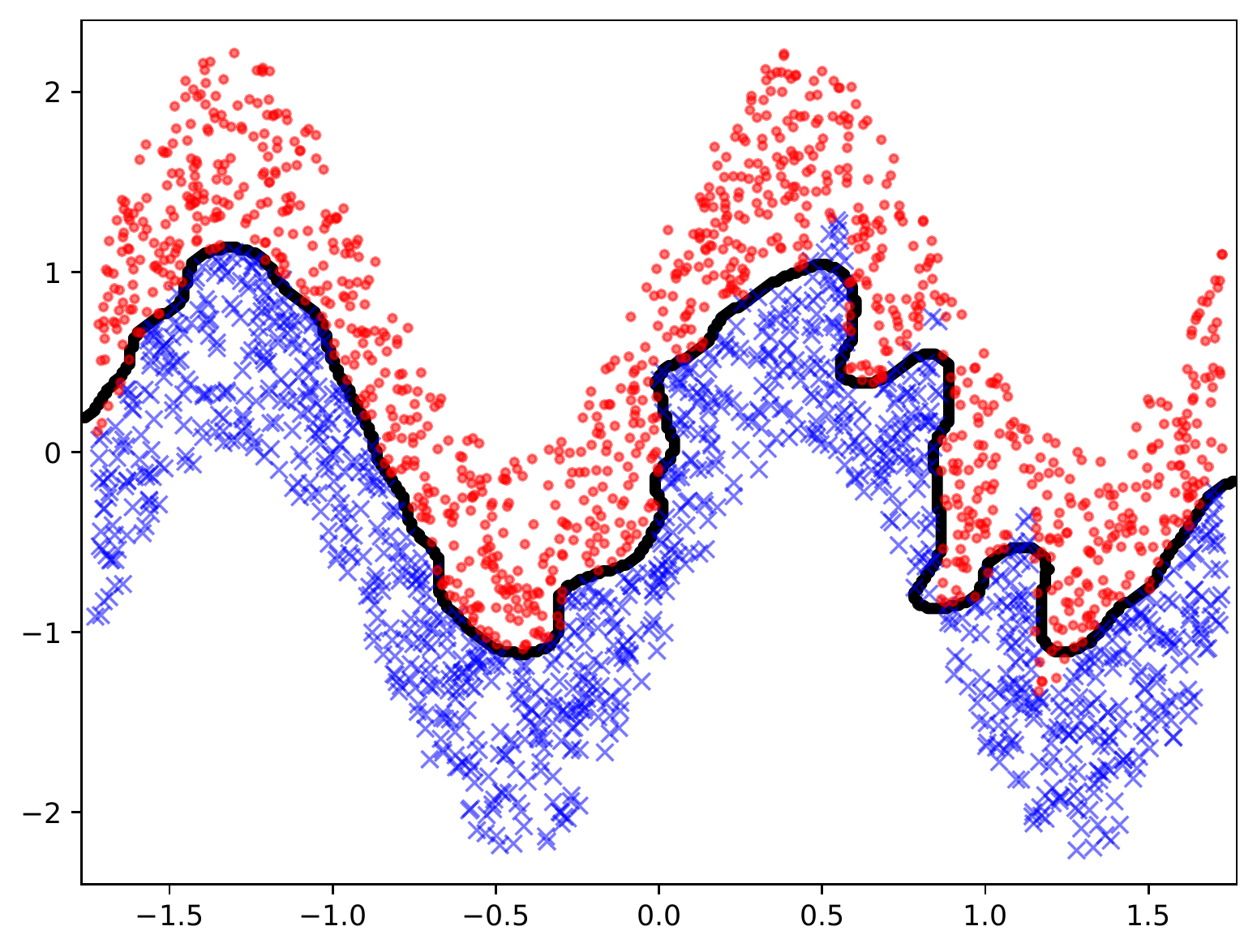}}
\subfigure[CIDA (Ours) ]{\includegraphics[width=0.23\textwidth]{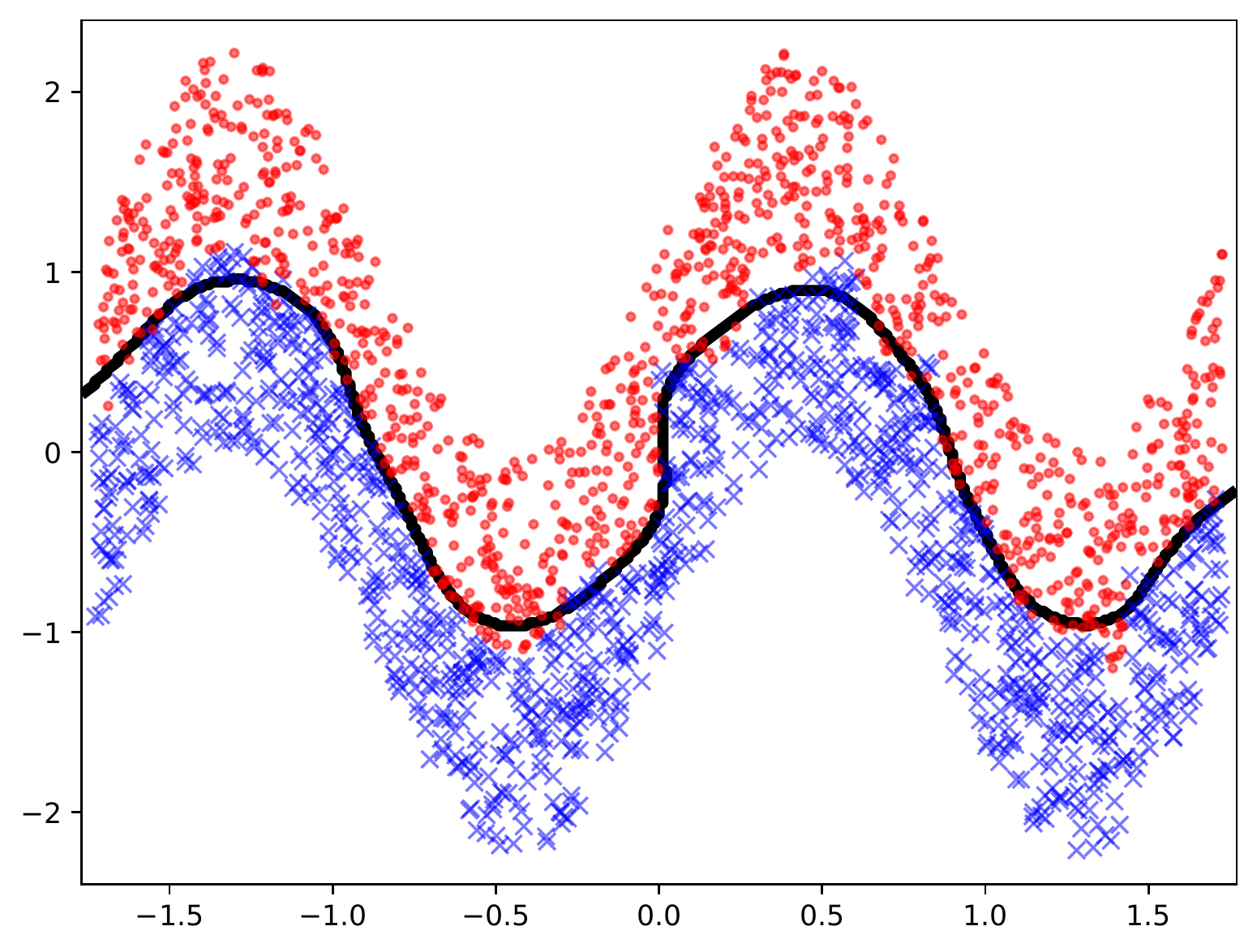}}
\vspace{-12pt}
\caption{Results on the \emph{Sine} dataset with $12$ domains. The first $5$ domains are source domains marked by green boxes. Red dots and blue crosses are positive and negative data samples. Black lines show the decision boundaries generated according to model predictions.}
\label{fig:toy-sin}
\vskip -0.5cm
\end{figure*}

\subsection{Baselines and Implementations}
We compare variants of CIDA with state-of-the-art domain adaptation methods including Domain Adversarial Neural Network (\textbf{DANN})~\cite{DANN}, Conditional Domain Adversarial Neural Network (\textbf{CDANN})~\cite{CDANN}, Adversarial Discriminative Domain Adaptation
(\textbf{ADDA})~\cite{ADDA}, Margin Disparity
Discrepancy (\textbf{MDD})~\cite{MDD}, and Continuous Unsupervised Adaptation (\textbf{CUA})~\cite{CUA}.
ADDA and MDD merge data with different domain indices into one source and one target domains; DANN, CDANN, and CUA divide the continuous domain spectrum into several separate domains and perform adaptation between multiple source and target domains. CUA adapts from the source domains to each target domain one-by-one from the closest target to the farthest one.
For a fair comparison with CIDA, all baselines use both $\x$ and the domain index $u$ as inputs to the encoder.

All methods are implemented using PyTorch~\cite{PyTorch} with the same neural network architecture. $\lambda_d$ is chosen from $\{0.2, 0.5, 1.0, 2.0, 5.0\}$ and kept the same for all tasks associated with the same dataset (see the Supplement for more details about training).

\begin{table*}[!t]
\begin{footnotesize}
\vskip -0.3cm
\begin{center}
\caption{ \textbf{\emph{Rotating MNIST} accuracy (\%) for various adaptation methods.} We report the accuracy at the source domain and each target domain. $X^\circ$ denotes the domain whose images are Rotating by $X^\circ$ to $X+45^\circ$. The last column shows the average accuracy across target domains. We use \textbf{bold face} to mark the best results.}
\label{tab:mnist}
\vspace{1mm}
\begin{tabular}{lcccccccccc}
\hline
Method & \# Target Domains & $0^\circ$ (Source) & $45^\circ$ & $90^\circ$ & $135^\circ$ & $180^\circ$ & $225^\circ$ & $270^\circ$ & $315^\circ$ & Average \\
\hline
Source-Only & - & 98.4 & 81.4 & 29.8 & 33.6 & 41.4 & 39.0 & 30.4 & 81.1 & 48.1 \\
ADDA        & 1 & 95.0 & 70.2 & 25.5 & 44.0 & 59.2 & 46.2 & 23.7 & 61.4 & 47.2 \\
DANN        & 1 & 98.1 & 80.1 & 44.8 & 42.2 & 43.6 & 46.8 & 57.3 & 79.3 & 56.3 \\
CUA         & 7 & 91.4 & 73.9 & 60.1 & 55.0 & 52.7 & 45.1 & 55.2 & 88.4 & 61.5 \\
CIDA (Ours) & $\infty$ & \textbf{99.1} & 87.2 & 56.7 & 79.6 & 91.2 & \textbf{91.5} & \textbf{96.2} & \textbf{97.5} & 85.7 \\
PCIDA (Ours) & $\infty$ & 98.6 & \textbf{90.1} & \textbf{82.2} & \textbf{90.5} & \textbf{91.9} & 87.1 & 80.0 & 88.2 & \textbf{87.1} \\
 \hline
 \vspace{-10mm}
\end{tabular}
\end{center}
\end{footnotesize}
\end{table*}

\subsection{Toy Datasets}
To gain insight into the differences between CIDA and the baselines, we start with two toy datasets: \emph{Circle} and \emph{Sine}.

\textbf{Circle Dataset} includes $30$ domains indexed from $1$ to $30$.
\figref{fig:toy-circle}(a) shows the $30$ domains in different colors. We also use arrows to indicates domain $1$ and domain $15$. Each domain contains data on a circle. The task is binary classification. \figref{fig:toy-circle}(b) shows positive samples as red dots and negative samples as blue crosses. As shown in the figure, the ground-truth decision boundary continuously evolves with the domain index. We use domains $1$ to $6$ as source domains and the rest as target domains.
\figref{fig:toy-circle} compares the results of CIDA  with the baselines. The figure shows that overall categorical DA methods perform poorly when asked to align domains with continuous indices. CUA is the best performing baseline since it incrementally adapts $24$ pairs of domains. Still, CUA's performance is inferior to CIDA which produces a more accurate decision boundary.

\textbf{Sine Dataset} includes $12$ domains as shown in \figref{fig:toy-sin}(a). Each domain covers $\frac{1}{6}$ the period of the sinosoid. We consider the first $5$ domains as source domains and the rest as target domains.
\figref{fig:toy-sin} shows the results. The figure shows that it is very challenging for the baselines to capture the ground-truth decision boundary. The baselines either produce incorrect decision boundaries (ADDA and MDD) or only capture the correct trend with very rugged boundaries (DANN, CDANN and CUA). In contrast, CIDA can successfully recover the ground-truth boundary.
We also note that while CUA  performed better than the other baselines on the Circle dataset, it performed worse than DANN and CDANN on the Sine dataset. This is because CUA performs incremental pairwise adaptation; it fails on the pair $(5,9)$, and this failure propagates to the following domains.

Overall, both the results from the Circle and Sine datasets demonstrate that CIDA captures the underlying relationship between the domain index and the classification task and leverages it to improve performance. In contrast, the baselines cannot accurately capture this relationship, and hence yield worse results.

\newcommand{\tabincell}[2]{\begin{tabular}{@{}#1@{}}#2\end{tabular}}

\begin{table*}[t]
\begin{footnotesize}
\vskip -0.3cm
\begin{center}
\caption{ \textbf{Accuracy (\%) for intra-dataset adaptation.} `\emph{SHHS}@Outside $\rightarrow$ \emph{SHHS}@(52,75]' means transferring from age range outside (52,75] (i.e., [44,52]$\cup$(75,90]) to (52,75] within \emph{SHHS}. `SO' is short for `Source-Only'. 
We use \textbf{bold face} mark the best results.
} 
\label{tab:intra}
\vspace{1mm}
\begin{tabular}{clcccccccc}
\hline
& Task  & SO  & ADDA & DANN & CDANN & MDD & CUA & CIDA & PCIDA \\
\hline
\multirow{3}{*}{\tabincell{c}{Domain\\Extrapolation}} & \emph{SHHS}@[44,52] $\rightarrow$ \emph{SHHS}@(52,90] & 77.4 & 78.0 & 77.1 & 77.5 & 77.7 & 77.4 & 79.8 & \textbf{80.6}  \\
 & \emph{MESA}@[54,58] $\rightarrow$ \emph{MESA}@(58,95]   & 80.1 & 80.7 & 79.9 & 80.4 & 80.3 & 80.1 & \textbf{82.7} & 82.5 \\
 & \emph{SOF}@[75,82] $\rightarrow$ \emph{SOF}@(82,90]   & 74.7 & 74.8 & 74.2 & 74.4 & 74.6 & 74.5 & \textbf{76.7} & \textbf{76.7} \\
\hline
\multirow{3}{*}{\tabincell{c}{Domain\\Interpolation}} & \emph{SHHS}@Outside $\rightarrow$ \emph{SHHS}@(52,75] & 82.4 & 81.7 & 82.5 & 82.3 & 82.5 & 82.4 & 82.2 & \textbf{83.7}  \\
& \emph{MESA}@Outside $\rightarrow$ \emph{MESA}@(58,75]   & 83.5 & 83.5 & 83.2 & 83.3 & 83.8 & 83.4 & 83.5 & \textbf{84.7} \\
& \emph{SOF}@Outside $\rightarrow$ \emph{SOF}@(79,86]   & 71.8 & 71.5 & 71.4 & 70.9 & 71.8 & 71.5 & 71.8 & \textbf{73.6}  \\
 \hline
\end{tabular}
\end{center}
\vskip -0.7cm
\end{footnotesize}
\end{table*}
\begin{table*}[t]
\begin{footnotesize}
\begin{center}
\caption{ \textbf{Accuracy (\%) for cross-dataset adaptation.} We use \textbf{bold face} to mark the best results.}
\label{tab:cross}
\vspace{1mm}
\begin{tabular}{lcccccccc}
\hline
Task  & Source-Only  & ADDA & DANN & CDANN & MDD & CUA & CIDA & PCIDA \\
\hline
 \emph{SOF} $\rightarrow$ \emph{SHHS}
 & 75.6 & {76.0} & 75.2 & 75.6 & 75.8 & 75.3 & 75.9 & \textbf{80.1}  \\
  \emph{SOF} $\rightarrow$ \emph{MESA}
 & 74.0 & 75.1 & 74.6 & {75.2} & 74.9 & 73.6 & 74.8 & \textbf{80.0}  \\
  \emph{SHHS} $\rightarrow$ \emph{MESA}
 & 82.8& 83.0 & 82.6 & 82.1 & 83.0 & 82.1 & {83.2} & \textbf{85.3}  \\
  \emph{MESA} $\rightarrow$ \emph{SHHS}
 & 80.7 & {81.8} & 80.9 & 80.9 & 81.2 & 81.0 & 80.8 & \textbf{83.4}  \\
 \emph{SHHS} $\rightarrow$ \emph{SOF}
 & 78.7 & 79.5 & 79.0 & 79.2 & 79.7 & 79.1 & \textbf{81.1} & {80.9}  \\
 \emph{MESA} $\rightarrow$ \emph{SOF}
 & 75.9 & 76.6 & 77.0 & 76.9 & 76.9 & 76.0 & \textbf{79.3} & {79.0}  \\
 \hline
\end{tabular}
\end{center}
\vskip -0.7cm
\end{footnotesize}
\end{table*}

\subsection{Rotating MNIST}
We further evaluate our methods on the \emph{Rotating MNIST} dataset. The goal is to adapt from regular MNIST digits with mild rotation to significantly Rotating MNIST digits. We designate images that are Rotating by $0^\circ$ to $45^\circ$ as the labeled source domain, and assign images Rotating by $45^\circ$ to $360^\circ$ to the target domains. Naturally, the domain index is the rotation angle of the image. Since the target domain has a much larger range of rotation angles, we split the target domain into seven target domains for categorical domain adaptation baselines, DANN and CUA. These seven target domains contain images Rotating by $[45,90),[90,135),\cdots,[315,360)$ degrees, respectively. \tabref{tab:mnist} compares the accuracy our proposed CIDA/PCIDA with different baselines.
We can see ADDA and DANN hardly improve and sometimes even decrease the accuracy compared to not performing adaptation at all. This is because without capturing the underlying structure, adversarial encoding alignment may harm the transferability of the data. CUA's performs fairly well in target domains near source domains but poorly in distant domains.\footnote{We note that CUA's performance on our Rotating MNIST data is worse than in the original papers, possibly because our Rotating MNIST has images rotated by all angles as opposed to only $8$ fixed angles. Also we are using different model architectures. Please refer to the Supplement for details.} On the other hand, CIDA and PCIDA can learn such domain structure and successfully adapt the knowledge from source domains to target domains (see the Supplement for more details such as model architectures).

\subsection{Healthcare Datasets}
\textbf{Dataset Description.}
We use three medical datasets, Sleep Heart Health Study (\emph{SHHS})~\cite{SHHS}, Multi-Ethnic Study of Atherosclerosis (\emph{MESA})~\cite{MESA} and Study of Osteoporotic Fractures (\emph{SOF})~\cite{SOF}. Each dataset contains full-night breathing signals of subjects and the corresponding sleep stage labels (`Awake', 'Light Sleep', `Deep Sleep', and `Rapid Eye Movement (REM)'). Breathing signals are split into 30-second segments with one label for each segment. We consider the task of sleep stage prediction, i.e., to predict the sleep stage label $y$ given a breathing segment $\x$. This is a natural task in sleep studies and can be performed in the patient home by having them wearing a breathing belt. The breathing signal can then serve to predict sleep stages and also detect apnea (temporary cessation of breathing).

The datasets also contain subjects' information such as age, which is a natural domain index $u$. \emph{SHHS}, \emph{MESA}, and \emph{SOF} include $2{,}651$, $2{,}055$, and $453$ subjects, respectively. On average, there are $1{,}000$ segments (i.e., $8.33$ hours of breathing signals) for each subject. Different datasets have different domain index distributions. For example, subjects' age range in \emph{SHHS} is $[44,90]$, while the age ranges for \emph{MESA} and \emph{SOF} are $[54,95]$ and $[72,90]$, respectively. Apparently \emph{SOF} subjects are much older. \emph{SHHS} subjects and \emph{MESA} subjects have similar age ranges but the distributions are actually different (see the histogram plot in the Supplement).

\textbf{Intra-Dataset Adaptation.} We first evaluate our methods' performance on adaptation across continuously indexed domains within the same dataset using `age' as the domain index. We cover two cases:
\begin{Itemize}
\item \textbf{Domain Extrapolation.} For example, the source domain has data with a domain index (age) from the range [44,52], while the target domain contains data with a domain index range of (52,90]. 
\item \textbf{Domain Interpolation.} For example, in the source domain, the domain index range is [44,52]$\cup$(75,90], while in the target domain, the domain index range is (52,75]. 
\end{Itemize}

The first three rows of \tabref{tab:intra} show the results for domain extrapolation. One observation is that directly using categorical domain adaptation only achieves minimal performance boost compared to models trained only on the source domains (Source-Only). Some methods such as DANN and CUA achieve no or even negative performance improvement. On the other hand, CIDA variants can successfully transfer across subjects with different ages and significantly improve upon all baselines. Similarly, the last three rows in \tabref{tab:intra} show the results for domain interpolation. Note that Source-Only can already achieve satisfactory accuracy in domain interpolation, since the model naturally learns the average of data from both sides (e.g., [44,52]$\cup$(75,90]) and performs prediction for the data in the middle (e.g., (52,75]). For example, in the task `\emph{SHHS}$@$Outside $\rightarrow$ \emph{SHHS}@(52,75]', Source-Only already has a high accuracy of $82.4\%$, leaving little room for improvement. Interestingly, PCIDA can still further improve the accuracy by a tangible margin. This also shows PCIDA's ability to avoid bad equilibriums by using a discriminator that predicts distributions rather than values.

\begin{table}[t]
\begin{footnotesize}
\vskip -2mm
\begin{center}
\caption{\textbf{Accuracy (\%) for the multi-dimensional domain index setting.} The task is \emph{SHHS}@[44,52] $\rightarrow$ \emph{SHHS}@(52,90].}
\label{tab:multi}
\vskip 1mm
\begin{tabular}{cccc}
\hline
\# Dimensions  & Source-Only  & CIDA & PCIDA \\
\hline
 1 & 77.4 & 79.8 & \textbf{80.6} \\
 2 & 77.6 & 81.0 & \textbf{81.1} \\
 4 & 77.7 & 81.2 & \textbf{81.3} \\
11 & 77.7 & \textbf{82.6} & \textbf{82.6} \\
\hline
\end{tabular}
\end{center}
\vskip -6mm
\end{footnotesize}
\end{table}

\textbf{Cross-Dataset Adaptation.}
Most clinical trials collect data from a population with a specific medical condition, and exclude people who have other conditions. However in practice many patients have multiple medical conditions and hence doctors need to apply the results of a particular study outside the population for which the data is collected.  Thus, in this section we consider cross-dataset adaption. Specifically, we evaluate how different methods perform when transferring among the datasets \emph{SHHS}, \emph{MESA}, and \emph{SOF}. \tabref{tab:cross} shows the accuracy of all methods in these cross-dataset settings. We observe that categorical domain adaptation barely improves upon models trained with only source domains, while CIDA and PCIDA can naturally transfer across continuously indexed domains even in the cross-dataset setting with significant performance improvement. Interestingly, when the task is hard such as transferring from \emph{SOF}, a very old people dataset, to datasets with much more diverse age range, PCIDA becomes a clear winner. But when the task is relatively easier, such as transferring from datasets with diverse age range to a very old dataset, CIDA is marginally better than PCIDA; but, this latter difference in performance is minor, and PCIDA performs well across all scenarios.

We also note that \emph{SHHS} and \emph{MESA} are both diverse datasets with similar age distribution, which is why the Source-Only model already achieves high accuracy. Interestingly, even in such cases PCIDA can still achieve stable performance gain compared to all baselines.

\textbf{Multi-Dimensional Indices.} As mentioned in \secref{sec:method}, both CIDA and PCIDA naturally generalize to multi-dimensional domain indices. To demonstrate this feature, we leverage that the \emph{SHHS} dataset includes multiple variables per patient in addition to their age, such as their heart rate, their physical and emotional health scores, etc. We combine such variables with the person's age to create a multi-dimensional domain index. (see more details on different domain indices in the Supplement). \tabref{tab:multi} shows the accuracy for multi-dimensional CIDA/PCIDA. For reference, we report corresponding accuracy for Source-Only. Note that Source-Only takes both the breathing signals ($\x$) and the domain index ($u$) as input, as is done in all previous experiments. As expected we can observe substantial improvement in accuracy with multi-dimensional domain indices. 

Note that in the case of multi-dimensional indices, \emph{domain extrapolation} means that the target domain indices are \emph{outside} the convex hull of the source domain indices. Similarly, \emph{domain interpolation} means that the target domain indices are \emph{inside} the convex hull of the source domain indices. 
\section{Conclusion}
We identify the problem of adaptation across continuously indexed domains, propose a series of methods for addressing it, and provide supporting theoretical analysis and empirical results using both synthetic and real-world medical data. Our work demonstrates the viability of efficient adaptation across continuously indexed domains and its potential impact on important real-world applications. Future work could investigate the possibility of matching higher or even infinite order moments, and the application of the proposed methods to other datasets in robotics or the medical field. 

\section*{Acknowledgement}
We thank Guang-He Lee and Mingmin Zhao for the insightful and tremendously helpful discussions. We are also grateful to Xingjian Shi, Xiaomeng Li, Hongzi Mao as well as other members at NETMIT and CSAIL for their comments to improve this paper. We would also like to thank Daniel R. Mobley and NSRR for their help with the datasets.

\bibliography{paper}
\bibliographystyle{icml2020}

\end{document}


\language0
\lefthyphenmin=2
\righthyphenmin=3

\maketitle



\section{Proof}

\begin{lemma}[\textbf{Uniqueness of Constant Expectation}]
$\z$ and $u$ are random variables.  If $\EB_{u\sim p(u|\z)}[u]$ is constant w.r.t $\z$ , then $\EB_{u\sim p(u|\z)}[u] = \EB_{u\sim p(u)}[u],\forall \z$.
\end{lemma}
\begin{proof}
Let $\EB_{u\sim p(u|\z)}[u]=\mu, \forall~\z$. We then have
$
\EB_{u\sim p(u)}[u]=\EB_{\z\sim p(\z)}\EB_{u\sim p(u|\z)}[u]=\EB_{\z\sim p(\z)} \mu = \mu
$.
\end{proof}

\begin{lemma}[\textbf{Uniqueness of Constant Expectation and Variance}]
$\z$ and $u$ are random variables.  If $\EB_{u\sim p(u|\z)}[u]$ and $\VB_{u\sim p(u|\z)}[u]$ are constants w.r.t $\z$ , then $\EB_{u\sim p(u|\z)}[u] = \EB_{u\sim p(u)}[u]$ and $\VB_{u\sim p(u|\z)}[u] = \VB_{u\sim p(u)}[u]$ for any $\z$.
\end{lemma}
\begin{proof}
Let $\EB_{u\sim p(u|\z)}[u] = \mu$ and $\VB_{u\sim p(u|\z)}[u] =\sigma^2$ for any $\z$. By the previous lemma, we have $\EB_{u\sim p(u)}[u] = \mu$. For the variance, we have: 
\begin{align*}
\VB_{u\sim p(u)}[u] &= \EB_{u\sim p(u)}[(u - \EB[u])^2] = \EB_{\z \sim p(\z)}\EB_{u\sim p(u|\z)}[(u - \EB[u|\z])^2] \\
&=\EB_{\z\sim p(\z)}\VB_{u\sim p(u|\z)}[u] 
=\EB_{\z\sim p(\z)}\sigma^2 = \sigma^2,
\end{align*}
concluding the proof.
\end{proof}

\begin{lemma}[\textbf{Optimal Discriminator for PCIDA}]\label{lem:opt_dis_pcida} With E fixed, the optimal D is
\begin{align*}
D^*_{{\mu},E}(\z) &= \mathbb{E}_{u\sim p(u | \z)} [u],\\
D^*_{{\sigma^2},E}(\z) &= \mathbb{V}_{u\sim p(u | \z)} [u],
\end{align*}
where $\z=E(\x,u)$.
\end{lemma}
\begin{proof}
The optimal $D$:
\begin{align*}
D^*_{E}(\x, u) = \argmin\limits_{D} \mathbb{E}_{(\z,u)\sim p(\z,u)} [L_d(D(\z),u)],
\end{align*}
where the objective function expands to
\begin{align*}
&\mathbb{E}_{(\z,u)\sim p(\z,u)}[L_d((D_{\mu}(\z), D_{\sigma^2}(\z)),u)]\\
= & \mathbb{E}_{(\z,u)\sim p(\z,u)}[\frac{(D_{\mu}(\z) - u)^2}{2 D_{\sigma^2}(\z)} + \frac{1}{2}\log D_{\sigma^2}(\z)]\\
= & \mathbb{E}_{\z\sim p(\z)} \mathbb{E}_{u\sim p(u | \z)}[\frac{(D_{\mu}(\z) - u)^2}{2 D_{\sigma^2}(\z)} + \frac{1}{2}\log D_{\sigma^2}(\z)].
\end{align*}
Notice that
\begingroup\makeatletter\def\f@size{9}\check@mathfonts
\begin{align*}
& \mathbb{E}_{u\sim p(u | \z)}[\frac{(D_{\mu}(\z) - u)^2}{2 D_{\sigma^2}(\z)} + \frac{1}{2}\log D_{\sigma^2}(\z)] \nonumber\\
= & \frac{\mathbb{E}[u^2|\z]}{2D_{\sigma^2}(\z)} 
- \frac{D_{\mu}(\z)}{D_{\sigma^2}(\z)}\mathbb{E}[u|\z] 
+ \frac{D_{\mu}(\z)^2}{2D_{\sigma^2}(\z)} + \frac{1}{2}\log D_{\sigma^2}(\z).
\end{align*}
\endgroup
Taking the derivative w.r.t. $D(\z)$ and setting it to $0$, we get the optimal
$D^*_{{\mu},E}(\z) = \EB[u|\z]$ and $
D^*_{{\sigma^2},E}(\z) = \VB[u|\z]$, completing the proof.
\end{proof} 

\section{Experiments}

In this section we provide more details for our experiments. The code is available at \url{https://github.com/hehaodele/CIDA}.

\subsection{Experiment on the Healthcare Datasets}

\textbf{Dataset details.} The three real-world medical datasets~\cite{SHHS,MESA,SOF} with detailed information are publicly available\footnote{\url{https://sleepdata.org/}}. They can all be freely accessed upon request and submission of relevant IRB documents. 
In \figref{fig:age-hist} we plot the histograms subjects' age in the three medical datasets. All the three datasets contains many health retaled variables of the subjects. In \tabref{tab:domain_index}, we list all the variables we considered as the domain indices.

\begin{table}[!tb]
\centering
\vskip -0.5cm
\caption{$11$ domain indices in the \emph{SHHS} dataset.}\label{tab:domain_index}
\vskip 0.09cm
\begin{tabular}{c|l} \hline
$u_1$   & Age  \\ \hline
$u_2$   & Resting heart rate  \\ \hline
$u_3$   & Gender  \\ \hline
$u_4$   & Physical functioning  \\ \hline
$u_5$   & Role limitation due to physical health  \\ \hline
$u_6$  & General health  \\ \hline
$u_7$  & Role limitation due to emotional problems  \\ \hline
$u_8$   & Energy/fatigue  \\ \hline
$u_9$   & Emotional well being  \\ \hline
$u_{10}$   & Social functioning  \\ \hline
$u_{11}$   & Pain Level \\ \hline
\end{tabular}
\vskip 0.2cm
\end{table}

\textbf{Implementations.}
We use the same neural network architecture in all methods for fair comparison. \tabref{table:structure} shows the neural network architecture for the encoders taking breathing signals $\x$ as input. `Number' in the tables indicates the number of corresponding blocks stacked in the network. The predictor includes $3$ fully connected layers, each with batch normalization and ReLU. Similarly, the discriminator includes $5$ fully connected layers. For the baseline models, we explore different $\lambda_d$ (the hyperparameter for the discriminator term) in the range $\{0.2, 0.5, 1.0, 2.0, 5.0\}$ and find that $\lambda_d = 2.0$ produce stable and the best results in the toy datasets. We follow recommendations from the original papers for other hyperparameters. We set $\lambda_d=2.0$ for all methods including CIDA/PCIDA. We train all models using the Adam optimizer~\cite{Adam} with a learning rate of $10^{-4}$. We run all experiments on a server with four NVIDIA Titan Xp GPUs.

\begin{table*}[!tb]
\begin{footnotesize}
\centering
\vskip -0.1cm
\caption{Network structure for the encoder.}\label{table:structure}
\vskip 0.05cm
\begin{tabular}{ccccccc} \hline
Kernel & Stride & Channel In & Channel Middle & Channel Out & Type & Number \\ \hline
13 & 2 & 1 & -   & 64 & Conv & 1 \\
9 & 1 & 64       & 64  & 64 & ResBlock & 1 \\
9 & 2 & 64       & 64  & 128 & ResBlock & 1 \\
9 & 1 & 128      & 128 & 128 & ResBlock & 1 \\
7 & 1 & 128      & 128 & 256 & ResBlock & 1  \\
9 & 5 & 256      & 256 & 256 & ResBlock & 1  \\
5 & 1 & 256      & 256   & 512 & ResBlock & 1  \\
5 & 1 & 512      & 512   & 512 & ResBlock & 1  \\
5 & 1 & 512      & 384   & 384 & ResBlock & 1  \\
9 & 5 & 384      & 384   & 384 & ResBlock & 1  \\
3 & 1 & 384      & 384   & 384 & ResBlock & 1  \\
5 & 1 & 384      & 384   & 384 & ResBlock & 1  \\ \hline
\end{tabular}
\vskip -0.0cm
\end{footnotesize}
\end{table*}

\begin{figure*}[!tb]
\centering     
\subfigure[SHHS]{\includegraphics[width=0.32\textwidth]{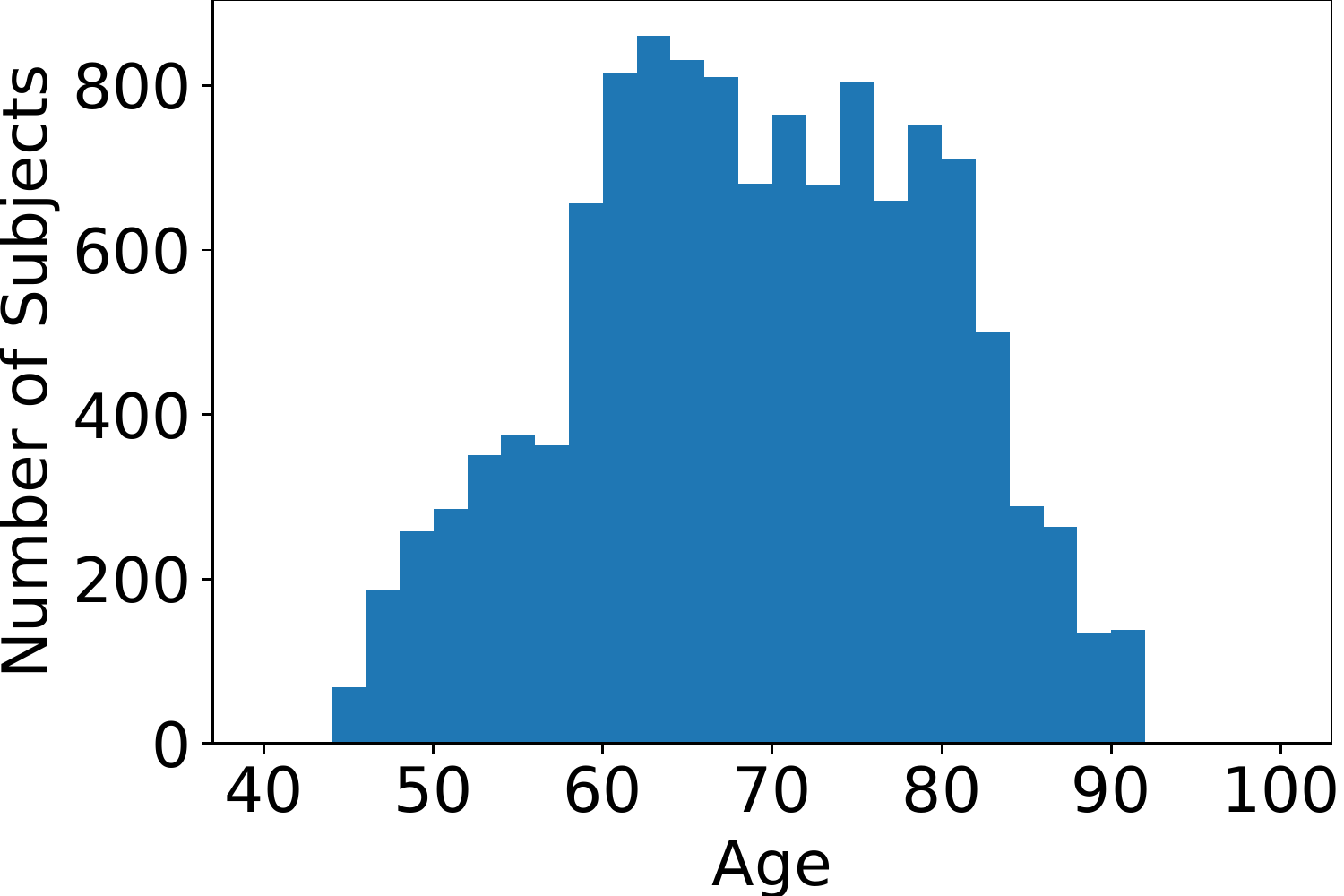}}
\subfigure[MESA]{\includegraphics[width=0.32\textwidth]{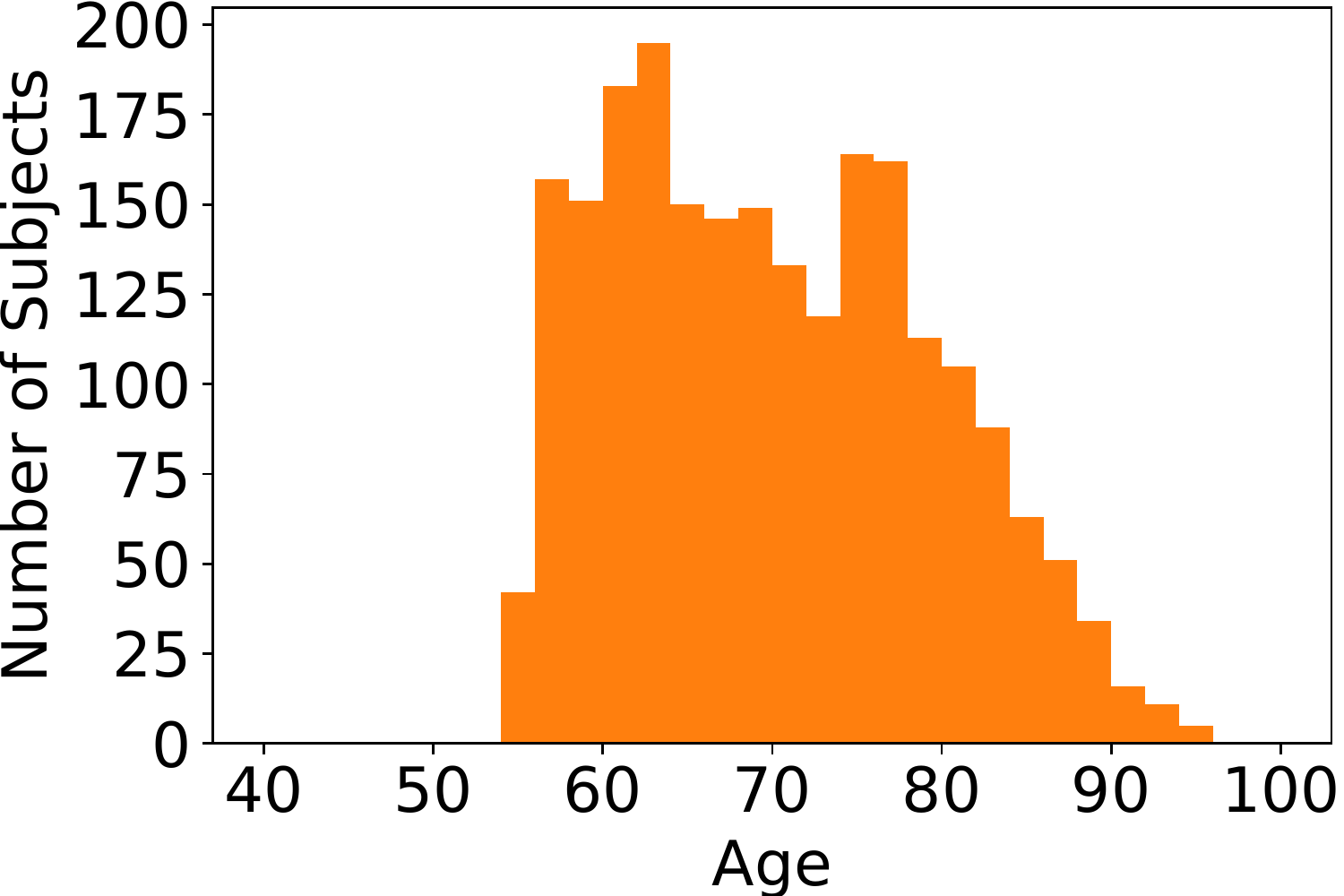}}
\subfigure[SOF]{\includegraphics[width=0.32\textwidth]{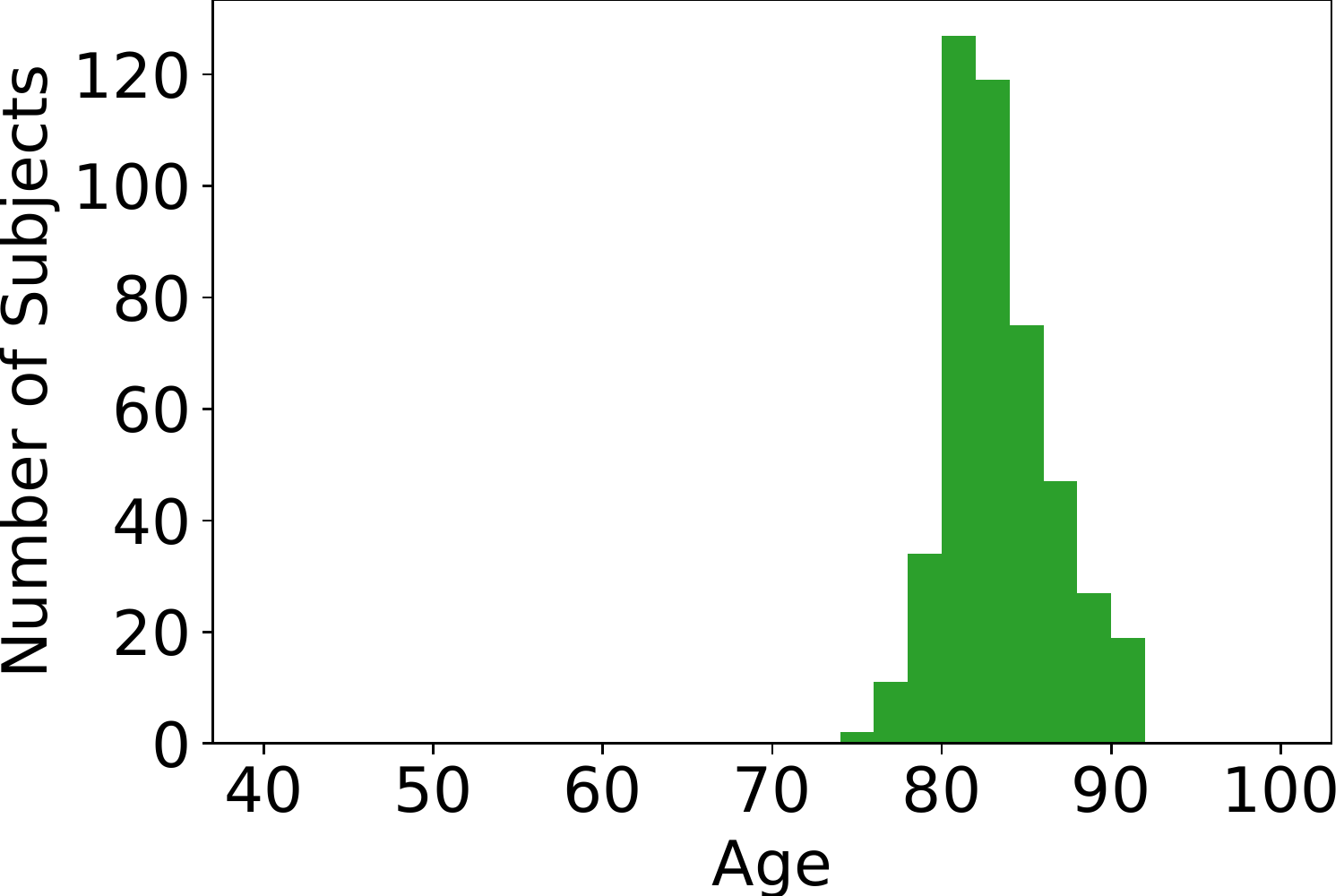}}

\caption{Age histograms for three medical datasets.}
\label{fig:age-hist}
\end{figure*}

\subsection{Experiment on the Rotating MNIST}

\textbf{Dataset details.} 
In our \emph{Rotating MNIST}, there are $60{,}000$ images in each domain index interval spanning $45$ degrees from $[0^\circ, 45^\circ)$ to $[315^\circ,360^\circ)$. They are generated by rotating each of the $60{,}000$ image in the MNIST training set by a angle randomly sampled the corresponding domain index interval. Therefore, this dataset contains images with rotation angles evenly spread in the range $[0, 360^\circ)$. We note that this is different from the \emph{Rotating MNIST} dataset in \cite{CUA}, where the images are Rotating by $8$ fixed angles. Another difference is that in our Rotating MNIST, the amount of data in target domains is 7 times as many as that in source domain while in \cite{CUA}, the target domain has the same amount of data as the source domain.

\textbf{Implementations.}
We use the same neural network architecture in all methods for fair comparison. Mainly, we use a four-layer convolutional neural networks to encode the image and a three-layer MLP to make the prediction, while the discriminator is a four-layer MLP. In addition, we make two augmentations to provide the model with a stronger inductive bias. First, we add a Spacial Transfer Network (STN)~\cite{STN} to the image encoder. Basically, the STN will take the image and the domain index as input and output a set of rotation parameters which are then applied to rotate the given image. Second, we add the dropout layers to the STN and the ConvNet backbone. As mentioned in \cite{gal2016dropout}, dropout can be viewed as a way of performing Bayesian inference. Here, we use this dropout switch to make image encoder either deterministic or probabilistic. For more details, please refer to our code. 

\subsection{Experiments on the Toy Datasets}

\textbf{Visualization of the decision boundary (approximately).}
Unlike shallow models such as logistic regression, plotting deep neural networks' exact decision boundaries is not straightforward. To generate a virtual decision boundary for visualization, we fit an SVM with the RBF kernel by neural networks' prediction and draw the decision boundary of the SVM. To be fair, when fitting the SVM, we ensure that the fitting accuracy is the same for all deep learning models. Note that since the generated boundaries are not exact, we can observe some data points on the wrong side of the boundaries.
\section{Discussion}

\subsection{Categorical Domains versus Continuously Indexed Domains}
\textbf{Continuous Indices.} As mentioned in the main paper, the hypothesis of `continuous indices' is that input $\x$ and labels $y$ are drawn from $p(\x, y | u)$ given a specific domain index $u\in \mathcal{U}$, and that $p(\x, y | u)$ (and $p(y | \x, u)$) is continuous w.r.t. $u$. Therefore, CIDA tries to produce correct predictions in a continuous range of target domains by effectively capturing the underlying relation (functional) between $p(y | \x, u)$ and $u$. 

\textbf{Distance Metrics.} Such a hypothesis comes with a distance metric for domain indices, which are captured by the regression loss (e.g., euclidean distance for $L_2$ loss) in the discriminator. This is a key difference between CIDA and categorical domain adaptation, where any pair of domains effectively has the same distance. This is also true for categorical domain adaptation methods such as \cite{LSGAN}. Note that \cite{LSGAN} uses a least-square loss as a surrogate for cross-entropy to perform domain \emph{classification} in the discriminator, therefore still treating different domains as equal. This is substantially different from CIDA where the $L_2$ loss and the Gaussian (or Gaussian Mixture Model) loss are use to regress the domain indices.

\subsection{Matching $p(u | \z)$ versus Matching $p(\z | u)$} 
In general, matching the entire $p(u | \z)$ for any $\z$ is equivalent to matching the entire $p(\z | u)$ for any $u$. This is because $p(u | \z) = p(u) \iff p(\z | u) = p(\z) \iff u \indep \z$. However, matching the mean and variance of $p(u | \z)$ for any $\z$ is \textbf{different} from matching the mean and variance of $p(\z | u)$ for any $u$. Considering the dimension of $\z$ is much higher than that of $u$, the former is actually \textbf{stronger} alignment.

To see this, consider a simplified case where $\z \in \{1,2,3,4\}^{100}$ and $u \in  \{1,2,3,4\}$. Matching the mean and variance of $p(u | \z)$ requires matching the mean and variance of $4^{100}$ univariate distributions, i.e., $2\times 4^{100}$ parameters in total. On the other hand, Matching the mean and variance of $p(\z | u)$ only requires matching the mean and variance of $4$ $100$-dimensional distributions, i.e., $400$ parameters in total. Therefore the former implies stronger alignment.

\bibliography{paper}
\bibliographystyle{plain}

